\documentclass{article}

\usepackage[preprint]{neurips_2026}

\usepackage[utf8]{inputenc}
\usepackage[T1]{fontenc}
\usepackage{hyperref}
\usepackage{url}
\usepackage{booktabs}
\usepackage{amsfonts}
\usepackage{nicefrac}
\usepackage{microtype}
\usepackage{xcolor}

\usepackage{graphicx}
\usepackage{subcaption}
\usepackage{amsmath}
\usepackage{amssymb}
\usepackage{mathtools}
\usepackage{amsthm}
\usepackage{tikz-cd}
\usepackage{tikz}
\usepackage{tcolorbox}
\usepackage{float}
\usepackage{wrapfig}
\usepackage{placeins}

\usepackage[capitalize,noabbrev]{cleveref}

\theoremstyle{plain}
\newtheorem{theorem}{Theorem}[section]

\newtheorem{corollary}[theorem]{Corollary}
\theoremstyle{definition}

\theoremstyle{remark}
\newtheorem{remark}[theorem]{Remark}

\newcommand{\R}{\mathbb{R}}
\newcommand{\Tr}{\mathcal{T}}
\DeclareMathOperator{\tr}{Tr}
\DeclareMathOperator{\pdet}{pdet}
\DeclareMathOperator{\im}{im}
\DeclareMathOperator{\diag}{diag}
\DeclareMathOperator{\GL}{GL}
\DeclareMathOperator{\Vol}{Vol}
\newcommand{\Onl}{\mathcal{O}_{n_f}^{L}}
\newcommand{\va}{\vec{a}}
\newcommand{\vw}{\vec{w}}

\usepackage[disable,textsize=tiny]{todonotes}

\makeatletter
\renewcommand{\@maketitle}{%
  \vbox{%
    \hsize\textwidth
    \linewidth\hsize
    \vskip 0.1in
    \@toptitlebar
    \centering
    {\LARGE\bf \@title\par}
    \@bottomtitlebar
    \begin{minipage}{0.95\textwidth}\centering
      \@author
    \end{minipage}\par
    \vskip 0.3in \@minus 0.1in
  }%
}
\makeatother

\title{The Role of Symmetry in Optimizing Overparameterized Networks}

\author{%
  \textbf{Kusha Sareen\thanks{Corresponding author: \texttt{kusha.sareen@mila.quebec}} \quad Mohammad Pedramfar \quad S\'ekou-Oumar Kaba} \\
  \textbf{Mehran Shakerinava \quad Siamak Ravanbakhsh} \\[6pt]
  McGill University \\
  Mila - Quebec Artificial Intelligence Institute
}

\begin{document}

\maketitle

\begin{abstract}
Overparameterization is central to the success of deep learning, yet the mechanisms by which it improves optimization remain incompletely understood. We analyze weight-space symmetries in neural networks and show that overparameterization introduces additional symmetries that benefit optimization in two distinct ways. First, we prove that these symmetries act as a form of diagonal preconditioning on the Hessian, enabling the existence of better-conditioned minima within each equivalence class of functionally identical solutions. Second, we show that overparameterization increases the probability mass of global minima near typical initializations, making these favourable solutions more reachable. These results offer a potential link between loss landscape geometry and simplicity bias.
Empirically, we observe wider networks have lower top eigenvalues, smaller condition numbers and faster convergence, matching our analysis.
% Experiments support our theoretical predictions: as width increases, the Hessian trace decreases, condition numbers improve, and convergence accelerates.
Our analysis provides a unified framework for understanding overparameterization and width growth as a geometric transformation of the loss landscape.
\end{abstract}
\section{Introduction}
Overparameterized neural networks consistently outperform their smaller counterparts in optimization speed and final performance, even when the target function could be represented by much smaller architectures \citep{arora2018optimization, yang2022tensor}. A major line of theory explains this via large-width limits and linearization
% : sufficiently overparameterized networks exhibit kernel-like dynamics and provable convergence of first-order methods
\citep{jacot2018ntk,allenzhu2019convergence,du2019gradient,zou2019sgd,chizat2019lazy}. Empirical studies of curvature complement this picture, showing that overparameterized networks have highly structured Hessian spectra with large near-zero subspaces and few data-dependent outliers \citep{sagun2017hessian,ghorbani2019investigation}.

The loss landscape is also shaped by \emph{symmetry}: neural networks possess large groups of function-preserving reparameterizations (permutations, rescalings, and more structured invariances) that induce equivalence classes of weights and manifolds of functionally identical minima \citep{ simsek2021geometry,kunin2021neural,grigsby2023hidden,brea2019permutation, 10.5555/3692070.3693489}. These symmetries complicate naive geometric notions such as ``sharpness'' \citep{dinh2017sharp} and help explain why seemingly distinct solutions can be connected by low-loss paths \citep{garipov2018loss,draxler2018no}.

This paper sits at the intersection of these two perspectives. We provide a more comprehensive characterization of the weight-space symmetries due to overparameterization, building on \citet{FUKUMIZU2000317, simsek2021geometry}, and show they aid optimization through two mechanisms:

\textbf{Mechanism 1: Diagonal Preconditioning.} The eigenvalues of the Hessian $H$ after a symmetry transformation are the same as those for $K^{1/2}HK^{1/2}$, where $K$ is a diagonal matrix that depends on the symmetry transformations. The right choice of $K$ can improve the Hessian conditioning, especially if eigenvectors are approximately axis-aligned.
% , a property enhanced by rotation symmetries in architectures like Transformers \citep{zhang2025permutation}.

\textbf{Mechanism 2: Volume Growth.} Overparameterization increases the total \emph{volume} of global minima near typical initializations; specifically, the probability of initializing within $\epsilon$ of a minimum grows with width once the network is sufficiently wide. Adding parameters introduces new symmetries that expand the equivalence class, making well-conditioned, reachable minima more abundant.

% Our analysis is primarily an \emph{existence result}. We characterize which minima can exist and their properties, rather than which minima gradient descent reaches. This distinction reflects an ``anthropic principle'' for optimization: we need not worry about creating badly-conditioned minima with small volume, since optimization algorithms naturally avoid them. What matters is that \emph{some} well-conditioned, high-volume minima exist, and we show that overparameterization can provide this.

\paragraph{Contributions.}
\begin{itemize}
    \item We characterize overparameterization symmetries via a matrix groupoid building on \citet{simsek2021geometry}, and identify a new pre-activation splitting symmetry for ReLU networks (\cref{sec:characterizing,sec:groupoid}).
    \item We prove these symmetries act as diagonal preconditioning on the Hessian, with explicit formulas for how gradients and eigenvalues transform. Under even splits, the product of nonzero eigenvalues is preserved while the trace can decrease, giving minima with better average-case conditioning (\cref{sec:optimization,sec:trace}).
    \item We show the probability of initializing within $\epsilon$ of a global minimum grows with width once the network is sufficiently wide (\cref{sec:volume}).
    \item Experiments on teacher-student MLPs, CNNs and Transformers, and on direct data fitting, confirm the predicted improvements in conditioning and convergence (\cref{sec:experiments}).
\end{itemize}
\section{Preliminaries}
\label{sec:prelim}
\paragraph{Parameter Space and Loss.}
Fix a depth-$L$ feedforward architecture with layer widths $n_0=d_{\mathrm{in}}, n = n_1, \dots, n_{L-1}, n_L=d_{\mathrm{out}}$.
Let $\Theta$ denote the corresponding parameter space (weights and biases). For $\theta\in\Theta$, write the realized function as
$f_\theta:\mathbb{R}^{d_{\mathrm{in}}}\to\mathbb{R}^{d_{\mathrm{out}}}$.
Given a ground-truth function $f^*$ and an input distribution $\mathcal{D}$, we consider the population risk
$\mathcal{L}(\theta)\;=\;\mathbb{E}_{x\sim\mathcal{D}}\bigl[\ell\bigl(f_\theta(x),f^*(x)\bigr)\bigr]$.
Our focus is on \emph{geometric} properties of $\mathcal{L}$ in parameter space near minima: equivalence classes of functionally identical parameters, their induced flat directions, and the curvature in directions transverse to these classes.

\paragraph{Functional Equivalence and Symmetry Orbits.}
We distinguish \emph{functional equivalence} from \emph{equivalence under a chosen symmetry family}.
Define $\theta\sim_f \theta'$ if $f_\theta=f_{\theta'}$ (equality as functions), and define $\theta\sim_{\mathcal{T}}\theta'$ if there exists $g\in\mathcal{T}$ with $g(\theta)=\theta'$, where
\begin{equation}
\mathcal{T} = \{g : \Theta \to \Theta \mid f_\theta(x) = f_{g(\theta)}(x) \ \forall x, \theta\}.
\end{equation}
Write the corresponding equivalence classes as
\[
[\theta]_f=\{\theta':\theta'\sim_f\theta\},\qquad
[\theta]_{\mathcal{T}}=\{\theta':\theta'\sim_{\mathcal{T}}\theta\}.
\]
For $\sim_{\mathcal{T}}$ to be an equivalence relation, we need $\mathrm{Id} \in \mathcal{T}$ and $g \in \mathcal{T} \iff g^{-1} \in \mathcal{T}$; transitivity follows from composition. When $\mathcal{T}$ contains all function-preserving transformations, $[\theta]_{\mathcal{T}} = [\theta]_f$; otherwise $[\theta]_{\mathcal{T}} \subset [\theta]_f$. To ensure closure, we define $\mathcal{T}$ through families of generators.

$[\theta]_{\mathcal{T}}$ is seen as the \emph{symmetry orbit} through $\theta$. At a minimum, directions tangent to this orbit typically correspond to zero-curvature directions of the Hessian, while curvature relevant for optimization lives in directions orthogonal to the orbit \citep{kunin2021neural, simsek2021geometry}.

\paragraph{Realizability.} We operate in a realizable setting, assuming the target function $f^*$ can be exactly represented by a neural network of some minimal (functional) width $n_f$. When an overparameterized network of width $n \ge n_f$ is trained to match $f^*$, global minima exist by construction. The central question is how this excess capacity reshapes the symmetry orbit structure and, through it, the curvature and reachability properties that govern optimization.

To make this setup mathematically explicit, we often adopt the terminology of a \emph{teacher-student} framework. A \emph{teacher} parameter vector $\theta^*\in\Theta_{n_f}$ defines the minimal target function $f^* = f_{\theta^*}$, while a \emph{student} network of width $n$ is trained to match it. Importantly, this is not a restrictive assumption: universal approximation theorems imply that \emph{any} target function can be represented by a network of sufficient minimal width.
\section{Characterizing Overparameterization Symmetries}
\label{sec:characterizing}

\begin{figure}[t]
    \centering
    \includegraphics[width=0.6\textwidth]{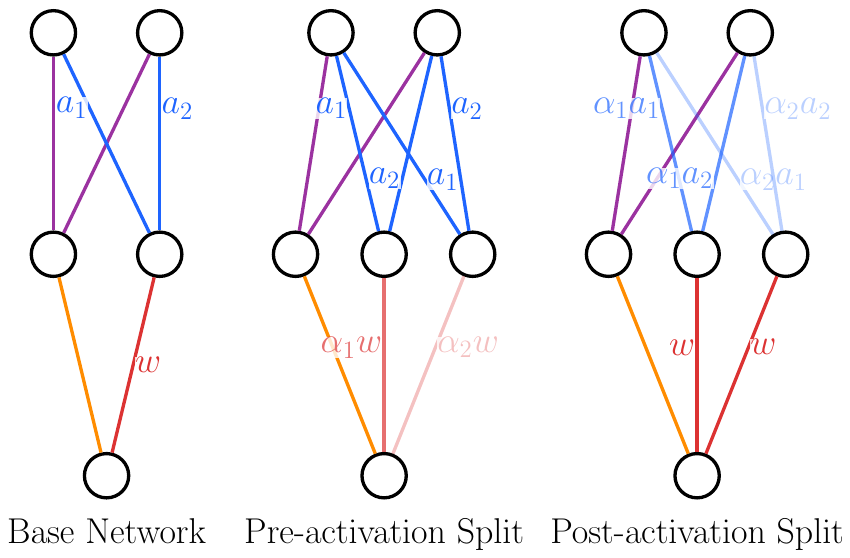}
    \caption{A neural network before and after splitting operations.}
    \label{fig:splitting}
\end{figure}

We begin by noting the definition of overparameterization used in this work, following \cite{simsek2021geometry}. We say a neural network is overparameterized if it has greater width than required to represent a given function. This is a natural notion that represents having excess capacity when studying optimization of a given function rather than over a dataset. It is related but distinct from the notion broadly relating the number of parameters to data samples.

Prior work gives a description of the overparameterized solution set, primarily for
\emph{single-hidden-layer} networks and analyzes how permutation symmetries produce manifolds of minima and symmetry-induced critical points \citep{simsek2021geometry}.
Here, we extend these results with a useful matrix viewpoint and explain what changes with depth and choice of activation.

\subsection{Fixed-width Symmetries in Nonlinear Networks}
We first remark that a number of well-studied fixed-width parameter symmetries can be seen as matrices. For a network with activation $\sigma$ to have a symmetry via matrix $M$, we require
\begin{equation} \label{eq:sym}
    M^{-1} \sigma(Mx) = \sigma(x) \,\forall x.
\end{equation}
For example, permuting hidden units in a width-$n$ layer is obtained by taking $M$ to be a permutation matrix. Similarly, for ReLU networks, neuron-wise rescaling is obtained by taking $M=\mathrm{diag}(\lambda_1,\dots,\lambda_n)$ with $\lambda_i>0$. For permutations, \cref{eq:sym} holds since $\sigma$ is applied pointwise and for ReLU, the activation is equivariant to positive rescaling.
\subsection{Overparameterization in Deep Linear Networks}
\label{sec:linear}
Deep linear networks are parameterized as $f(x) = W_L \cdots W_2 W_1 x$. It is well documented that deep linear networks have a matrix factorization symmetry that impacts optimization \citep{saxe2014exact, ghosh2025learning}. In particular, if we have fixed width $n$, between any pair $W_{k+1} W_k$ we can insert an invertible matrix $M \in \GL_n(\R)$ as $M^{-1}M$. This is the same symmetry present between $W_K$ and $W_Q$ in self-attention.

To generalize this to overparameterization, we can think of $M$ as an expansion and following contraction operation. Here, we have $M \in \R^{r \times n}$ for $r > n$ and a corresponding pseudoinverse $M^+ \in \R^{n \times r}$ such that $M^+ M = I_n$.
% This rank constraint corresponds to the intersection subspace view that connects all equivalent minima in \citet{simsek2021geometry}.
Due to the use of pseudo-inverse, rather than inverse, we have additional degrees of freedom in the choice of $M^{+}$.
The transformation $W_{k} \to MW_{k}$ and $W_{k+1} \to  W_{k+1}M^+$ preserves the function while changing the parameterization.
\subsection{General Activations: Post-Activation Splitting}
\label{sec:general}
In the case of an arbitrary activation $\sigma$, the matrices $M$ become much more restricted since one cannot arbitrarily mix hidden units while preserving $f_\theta(x)$ for all $x$.
% For some completely arbitrary activation $\sigma$, we have pre-activation $\vw x$ and final output $\va \sigma(\vw x)$.
Since we make \textit{no assumptions} about $\sigma$, the only function-preserving operations are:
\begin{enumerate}
    \item \textit{Copy} the input and split the outgoing weights: duplicating a neuron with inputs $\vw$ and outputs $\va$ into neurons $(\vw, \vw)$ with outputs $(\va', \va'')$ satisfying $a_i' + a_i'' = a_i$.
    \item Add \textit{0-type} neurons with arbitrary inputs and output weights summing to $\vec{0}$ (Appendix~\ref{app:zero_type}).
\end{enumerate}
% These correspond exactly to the symmetries identified by \citet{simsek2021geometry}.

We characterize this through a class of \textit{duplication matrices}. Here, we allow expansion by a matrix $M \in \R^{r \times n}$ for $r > n$. $M$ is a duplication matrix and repeats rows $\vec{e}_i$ of $I_n$ with multiplicity $m_i$:

\begin{equation}
M = \begin{bmatrix}
\mathbf{1}_{m_1} \otimes \vec{e}_1^\top \\
\mathbf{1}_{m_2} \otimes \vec{e}_2^\top \\
\vdots \\
\mathbf{1}_{m_n} \otimes \vec{e}_n^\top
\end{bmatrix},
\end{equation}

where $\mathbf{1}_{m_i}$ is an $m_i$-dimensional vector of ones and $\otimes$ denotes the Kronecker product. Thus, each basis vector $\vec{e}_i$ of $I_n$ is duplicated $m_i$ times, yielding a total expansion $r = \sum_i m_i$.

% The contraction is only a matrix operation in some special cases.
Here, $M^+$ can be \textit{any} pseudoinverse such that $M^+ M = I_n$.
The freedom in choosing $M^+$ corresponds to redistributing outgoing weights across the expanded layer i.e., picking $(\va', \va'')$ so $a_i' + a_i'' = a_i$ for all $i$.
Our symmetry corresponds to transforming $$W_{l} \to M W_{l} \text{ and }W_{l+1} \to W_{l+1}M^+.$$
We extend this construction with per-weight coefficient matrices $C_i$, which give a strictly richer symmetry than the per-neuron formulation in \citet{simsek2021geometry}. Details and an inverse transformation are in Appendix~\ref{app:per_weight}.

\begin{tcolorbox}[title=Example: Duplication and Pseudoinverse, colback=blue!5]
For $M = \begin{bsmallmatrix} 1 & 0 \\ 1 & 0 \\ 0 & 1 \end{bsmallmatrix}$ (duplicating the first neuron), we can have several pseudoinverses $M^+$:
\[
M^+ = \begin{bmatrix} \tfrac{1}{2} & \tfrac{1}{2} & 0 \\ 0 & 0 & 1 \end{bmatrix}, \quad
\begin{bmatrix} 0.9 & 0.1 & 0 \\ 0 & 0 & 1 \end{bmatrix}, \quad \text{etc.}
\]
The freedom in $M^+$ corresponds to how we split the output weights.
\end{tcolorbox}
\subsection{ReLU and Piecewise-Linear Activations}
\label{sec:relu}
Many practical activations are not so pathological. For instance, ReLU and its variants are linear in half the domain, which makes them equivariant to positive rescaling: $\sigma(\lambda x) = \lambda \sigma(x)$ for $\lambda > 0$. This enables an additional symmetry: \textit{pre-activation splitting}.

To split the activations evenly, we multiply the pre-activation weights by positive coefficients $[d_1, d_2, \ldots, d_m]$ such that $\sum_i d_i = 1$. If the original pre-activation is positive, the ReLU remains linear, preserving the output. If negative, the activation stays negative after splitting.

This symmetry is stricter than post-activation splitting because it requires splitting the entire incoming weight vector with the same coefficient, rather than individually for each weight.

This can be written as left-multiplying by the expansion matrix:
\begin{equation}
M' = \begin{bmatrix}
D_{m_1} \otimes \vec{e}_1^\top \\
\vdots \\
D_{m_n} \otimes \vec{e}_n^\top
\end{bmatrix},
\end{equation}
where $D_{m_i} = [d_1, \ldots, d_{m_i}]^\top$ is a column vector with $d_j > 0$ and $\sum_j d_j = 1$. The corresponding contraction transformation on $W_{l+1}$ involves copying the columns, which is represented by right-multiplying by the pseudoinverse:
\begin{equation}
M'^+ = \begin{bmatrix}
\vec{e}_1 \otimes \mathbf{1}_{m_1}^\top & \cdots & \vec{e}_n \otimes \mathbf{1}_{m_n}^\top
\end{bmatrix}.
\end{equation}

\begin{tcolorbox}[title=Example: Pre-Activation Splitting (ReLU), colback=orange!5]
Even split of input weights, duplicating outputs:
\[
M' = \begin{bmatrix} \tfrac{1}{2} & 0 \\ \tfrac{1}{2} & 0 \\ 0 & 1 \end{bmatrix}, \quad
M'^+ = \begin{bmatrix} 1 & 1 & 0 \\ 0 & 0 & 1 \end{bmatrix}
\]
Input weights are halved; output weights are copied. This preserves the function because ReLU is equivariant to positive scaling.
\end{tcolorbox}
In conjunction with post-activation splitting, this gives a family of operations generated by these matrices. We discuss which other activation functions allow pre-activation splitting in Appendix~\ref{app:activations}.
\section{The Overparameterization Groupoid}
\label{sec:groupoid}

\begin{figure}[t]
  \centering
  \begin{small}
    \begin{tikzcd}[
    row sep=1.2em,
    column sep=1.8em,
    execute at end picture={
\coordinate (a) at (-4,2.4);
\coordinate (b) at (1.5,2.4);
\coordinate (c) at (-4,-1.5);
\draw[->,blue] (a) -- (b) node [midway, above] {\tiny min.\ functional width $n_f$};
\draw[->,red] (a) -- (c) node [midway, sloped, above, rotate=180] {\tiny width $n$};
}]
    \Theta_{1,1} \arrow[d, bend right=15] \arrow[dd, bend right=30] \arrow[ddd, bend right=45] \\
    \Theta_{2,1} \arrow[u, bend right=15] \arrow[d, bend right=15] \arrow[dd, bend right=30] & \Theta_{2,2} \arrow[d, bend right=15] \arrow[dd, bend right=30] \\
    \Theta_{3,1} \arrow[u, bend right=15] \arrow[uu, bend right=30] \arrow[d, bend right=15] & \Theta_{3,2} \arrow[u, bend right=15] \arrow[d, bend right=15] & \Theta_{3,3} \arrow[d, bend right=15] \\
    \Theta_{4,1} \arrow[u, bend right=15] \arrow[uu, bend right=30] \arrow[uuu, bend right=45] & \Theta_{4,2} \arrow[u, bend right=15] \arrow[uu, bend right=30] & \Theta_{4,3} \arrow[u, bend right=15] & \Theta_{4,4} \\
    \vdots & \vdots & \vdots & \vdots & \ddots
  \end{tikzcd}
  \end{small}
  \caption{The overparameterization groupoid. $\Theta_{n, n_f}$ denotes weights of width $n$ and minimum functional width $n_f$. Arrows indicate symmetry transformations that connect equivalent parameters across the widths.}
  \label{fig:groupoid}
\end{figure}

The symmetry transformations form a \emph{groupoid} rather than a group. A groupoid is a set of composable transformations following the group axioms that need not be defined on all objects. There are two reasons for using a groupoid.

\textbf{Partial composition.} A transformation $\tau_{a \to b}: \Theta_a \to \Theta_b$ can only compose with transformations originating from width $b$.

\textbf{Restricted inverses.} We cannot contract beyond the minimal width needed to represent the function.
Let $\Onl$ denote the groupoid of overparameterization operations on networks with $L$ hidden layers and minimum functional width $n_f$. The key property is that the function is preserved across the orbit: $f_\theta = f_{\theta'}$ for any $\theta' \in \theta \cdot \Onl$.

Suppose $b>a$. The groupoid consists of actions for a single layer as the set of tuples
$$\tau_{a \rightarrow b} = \{(M_{a\rightarrow b}, M^+_{a\rightarrow b})\},$$
where $M_{a\rightarrow b}$ is a valid expansion matrix and $M^+_{a\rightarrow b}$ a corresponding contraction (pseudoinverse) from \cref{sec:characterizing}. \cref{fig:groupoid} illustrates the structure: each column represents weights with a fixed minimum width $n_f$, and arrows connect equivalent parameterizations across different widths. See Appendix~\ref{app:groupoid} for a full characterization, including the per-weight coefficient form.
\section{Optimization Properties}
\label{sec:optimization}
We now analyze how overparameterization symmetries affect optimization. Note that we will rarely initialize into $\theta \cdot \Onl$ and optimize inside the orbit. Instead, the goal is to show that there exist minima with favourable properties and understand the action of width growth on the loss landscape geometry. A complete treatment can be found in Appendix~\ref{app:vectorize}.
\subsection{Gradients Under Splitting}
\label{sec:gradients}
We consider how gradients transform under splitting operations.
\begin{theorem}[Gradient Scaling]
\label{thm:gradient}
Consider a splitting operation with coefficients $\alpha_1, \alpha_2, \ldots, \alpha_m$ summing to 1, where each coefficient multiplies the full vector of incoming or outgoing weights. Let $D$ denote the set of duplicated weights. Then for the duplicated weight $w'_i$ associated with coefficient $\alpha_i$,
\[
\frac{\partial L}{\partial w'_i} = \alpha_i \frac{\partial L}{\partial w}.
\]
The gradients of all other weights, including the split weights, are unchanged.
\end{theorem}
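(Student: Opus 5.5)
Since the splitting operation preserves the realized function, the loss of the expanded network equals the loss of the original network at every point of the orbit. The natural tool is therefore the chain rule: at a split point, compare partial derivatives of the expanded network with those of the original. The argument is local to the two layers $l$ and $l+1$ touched by the split, so we fix a single neuron and let the other neurons of layer $l$ ride along unchanged.

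\textbf{Setup.} Consider the post-activation case first. A neuron with incoming weights $\vw$ and outgoing weights $\va$ is replaced by $m$ copies. Each copy has the same incoming vector $\vw$, so these are the duplicated weights $D$. The $i$-th copy has outgoing vector $\alpha_i\va$, so these are the split weights. At the split point every copy produces the same pre-activation $z=\vw^\top h$ and the same activation $\sigma(z)$.

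\textbf{Step 1: gradients of the split (outgoing) weights.} The derivative of the loss with respect to an outgoing weight is (back-propagated signal $\delta$ at layer $l+1$) $\times$ (activation $\sigma(z)$). Both factors are the same as in the original network, because:
\begin{itemize}
\item the downstream network is unchanged;
\item the forward pass into layer $l+1$ is identical, since $\sum_i \alpha_i \va\,\sigma(z)=\va\,\sigma(z)$;
\item the activation $\sigma(z)$ is identical.
\end{itemize}
So $\partial L/\partial(\alpha_i a_j)=\partial L/\partial a_j$, and the split weights have unchanged gradients. The same reasoning, that forward values and backward signals are unchanged, shows that the gradients of all untouched weights in other layers and neurons are unchanged.

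\textbf{Step 2: gradients of the duplicated (incoming) weights.} The backpropagated signal into copy $i$ is $(\alpha_i\va)^\top\delta\,\sigma'(z)=\alpha_i\,\va^\top\delta\,\sigma'(z)$. Multiplying by the input $h$ gives $\partial L/\partial w'_i=\alpha_i\,\partial L/\partial w$, as claimed.

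\textbf{Pre-activation (ReLU) case.} Here the roles are mirrored. The incoming weights are scaled, $\vw\to\alpha_i\vw$, and the outgoing weights are copied. Positive homogeneity gives $\sigma(\alpha_i z)=\alpha_i\sigma(z)$ and $\sigma'(\alpha_i z)=\sigma'(z)$ for $\alpha_i>0$ (away from the kink). Then:
\begin{itemize}
\item The derivative with respect to a copied outgoing weight is $\delta\,\sigma(\alpha_i z)=\alpha_i\,\delta\,\sigma(z)$. The duplicated weights are therefore again scaled by $\alpha_i$.
\item The derivative with respect to the scaled incoming weights is $\va^\top\delta\,\sigma'(\alpha_i z)\,h=\va^\top\delta\,\sigma'(z)\,h$, which is unchanged.
\end{itemize}
This confirms the statement in both cases. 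Taking expectations over $x\sim\mathcal D$ passes everything to the population risk.

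\textbf{Main obstacle.} The only delicate point is bookkeeping: keeping track, in each case, of which weights count as ``duplicated'' and which as ``split'', and checking that the same value of $\delta$ appears in both networks. For ReLU there is also non-differentiability at $z=0$. I would handle this by working with the a.e.\ derivative, or by noting that the sign of $\alpha_i z$ equals the sign of $z$, so whatever subgradient convention is used is preserved consistently.
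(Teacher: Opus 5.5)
Your proposal is correct and follows essentially the same route as the paper: backpropagation bookkeeping with $\partial L/\partial w_{ji}=\delta_j h_i$. It shows that the split weights see unchanged $(\delta, h)$, while the duplicated copies pick up a factor $\alpha_i$, through $\delta$ in the post-activation case and through $h$ in the pre-activation case via positive homogeneity. The one step you assert rather than check, that the backward signal into layers below $l$ is unchanged, is the paper's closing identity $M^\top\sigma' M^{+\top}=\sigma'$, i.e.\ $\sum_i \alpha_i\,\vw\,\sigma'(z)\,\va^\top\delta=\vw\,\sigma'(z)\,\va^\top\delta$, and it uses $\sum_i\alpha_i=1$.
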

\begin{corollary}[Even Split]
For an order-$m$ even split (all $\alpha_i = 1/m$), the gradient of each duplicated weight is $\frac{1}{m} \frac{\partial L}{\partial w}$.
\end{corollary}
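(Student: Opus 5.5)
The statement follows from the chain rule once we fix what stays the same at the split point. Take the weight $w$ before the split and write the loss as $L = \tilde L(w, \phi)$, where $\phi$ collects all other parameters. The splitting operation creates copies of $w$. In the post-activation case (\cref{sec:general}) these copies are the duplicated incoming weights of the $m$ copied neurons, and the split outgoing weights are $\alpha_i a$. In the pre-activation case (\cref{sec:relu}) the roles reverse: incoming weights are scaled by $\alpha_i$ and outgoing weights are copied. In both cases the expanded network is a function $L'(w'_1,\dots,w'_m,\dots)$. The key identity is that, at the split point, the expanded network computes the same intermediate quantity as the original. So we write the contribution of the $m$ copies as a sum: each copy $i$ contributes $\alpha_i$ times the original neuron's contribution. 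This uses either linearity of the outgoing sum or positive homogeneity $\sigma(\alpha x) = \alpha\sigma(x)$ for ReLU.

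\textbf{Step 1 (post-activation, duplicated incoming weight).} Write the relevant part of the next pre-activation as $\sum_i a'_i\,\sigma(w'_i{}^\top x + \dots)$ with $a'_i = \alpha_i a$. Differentiate with respect to the incoming weight $w'_i$ of copy $i$. By the chain rule, the factor multiplying the upstream gradient is $a'_i\,\sigma'(\cdot)\,x$. At the split point $w'_i = w$, so the pre-activation of every copy equals the original one. Both $\sigma'$ and the backpropagated signal $\partial L/\partial(\text{next layer})$ are therefore the same as in the original network, because the function, and hence every downstream activation, is preserved. We get $\partial L/\partial w'_i = \alpha_i a\,\sigma'(\cdot)\,x\cdot(\cdots) = \alpha_i\,\partial L/\partial w$.

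\textbf{Step 2 (pre-activation, duplicated outgoing weight).} Here the outgoing weight of copy $i$ is $a$ and its activation is $\sigma(\alpha_i w^\top x) = \alpha_i\sigma(w^\top x)$. Differentiating with respect to this copied outgoing weight gives the activation $\alpha_i\sigma(w^\top x)$ times the unchanged downstream gradient. This equals $\alpha_i\,\partial L/\partial a$.

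\textbf{Step 3 (split weights unchanged).} For a split weight (e.g.\ $a'_i = \alpha_i a$ in the post-activation case), the derivative is $\sigma(w^\top x)$ times the downstream gradient. This is exactly $\partial L/\partial a$, since copy $i$'s activation equals the original one. In the pre-activation case the same argument applies, using positive homogeneity of $\sigma'$: $\sigma'(\alpha_i z) = \sigma'(z)$. The derivative with respect to $\alpha_i w$ is $a\,\sigma'(\cdot)\,x$ times the downstream gradient, again unchanged. Weights in other layers see identical forward activations and backward signals, so their gradients are unchanged. The corollary follows by setting $\alpha_i = 1/m$.

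\textbf{Main obstacle.} The delicate point is justifying that the backpropagated signal and $\sigma'$ evaluated at each copy coincide with those of the original network. This needs function preservation at the level of every intermediate layer, not just the output. Here I would argue layer by layer that the expanded layer's contribution $W_{l+1}M^+ \sigma(MW_l h)$ equals $W_{l+1}\sigma(W_l h)$ for all $h$. I would also handle the ReLU kink at $0$ by a subgradient convention, or by restricting to inputs of measure-one differentiability.
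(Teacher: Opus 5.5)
Your proposal is correct and follows essentially the same route as the paper. The paper proves \cref{thm:gradient} by backpropagation: it writes $\partial L/\partial w^{(l)}_{ji} = \delta^{(l)}_j h^{(l-1)}_i$, notes that each copy's error signal $\delta$ (post-activation splitting) or activation $h$ (pre-activation splitting) is scaled by its coefficient while downstream signals are unchanged, and obtains the corollary by setting $\alpha_i = 1/m$. Your chain-rule computations in Steps 1--2 are exactly this argument, with somewhat more care about why the downstream error signals coincide.
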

% \begin{proof}[Proof of \cref{thm:gradient}]
% Using the standard notation $a^{(l)} = W^{(l)} h^{(l-1)} + b^{(l)}$ and $h^{(l)} = \sigma(a^{(l)})$ for the pre-activations and activations at layer $l$. Error backpropagates as $\delta^{(l-1)} = \sigma' \odot (W^{(l)})^\top \delta^{(l)}$. The gradient of the weights is $\frac{\partial L}{\partial W^{(l)}_{ji}} = \delta^{(l)}_j h^{(l-1)}_i$.

% For post-activation splitting, we duplicate the activations in layer $l$ and split the outgoing weights in layer $l+1$. The gradient of the outgoing split weights is unaffected since $\delta^{(l+1)}_j$ and $h^{(l)}_i$ are fixed. The gradient of the incoming duplicated weights in layer $l$ scales by $\alpha_i$ since $\delta^{(l)}_j$ is split with those coefficients.

% For pre-activation splitting, the argument is analogous with roles reversed. Finally, downstream gradients are not impacted since $M^\top \sigma' M^{+\top} = \sigma'$.
% \end{proof}
This means, for a set of transformations $\{M_l, M_l^+\}_{l=1...L}$ applied at width $n$ layers $\{W_l\}_{l=1...L}$, the gradient vector for all parameters $\theta = \text{vec}(\{W_l\}_{l=1...L}) $ transforms as $\nabla L_{\theta'} = B^T\nabla L_\theta$ for $B^T$ a rectangular block diagonal matrix given by
\begin{equation}
    B^T = \mathrm{diag}\left(I \otimes (M_1^+)^T, \dots, M_{l-1} \otimes (M_l^+)^T, \dots, M_{L-1} \otimes I \right)
\end{equation}
This comes from vectorizing the expanded weights across layers and taking the derivative.
% See Appendix~\ref{app:vectorize} for details.
\subsection{Hessians Under Splitting}
\label{sec:hessians}
\begin{figure}[t]
    \centering
    \includegraphics[width=\textwidth]{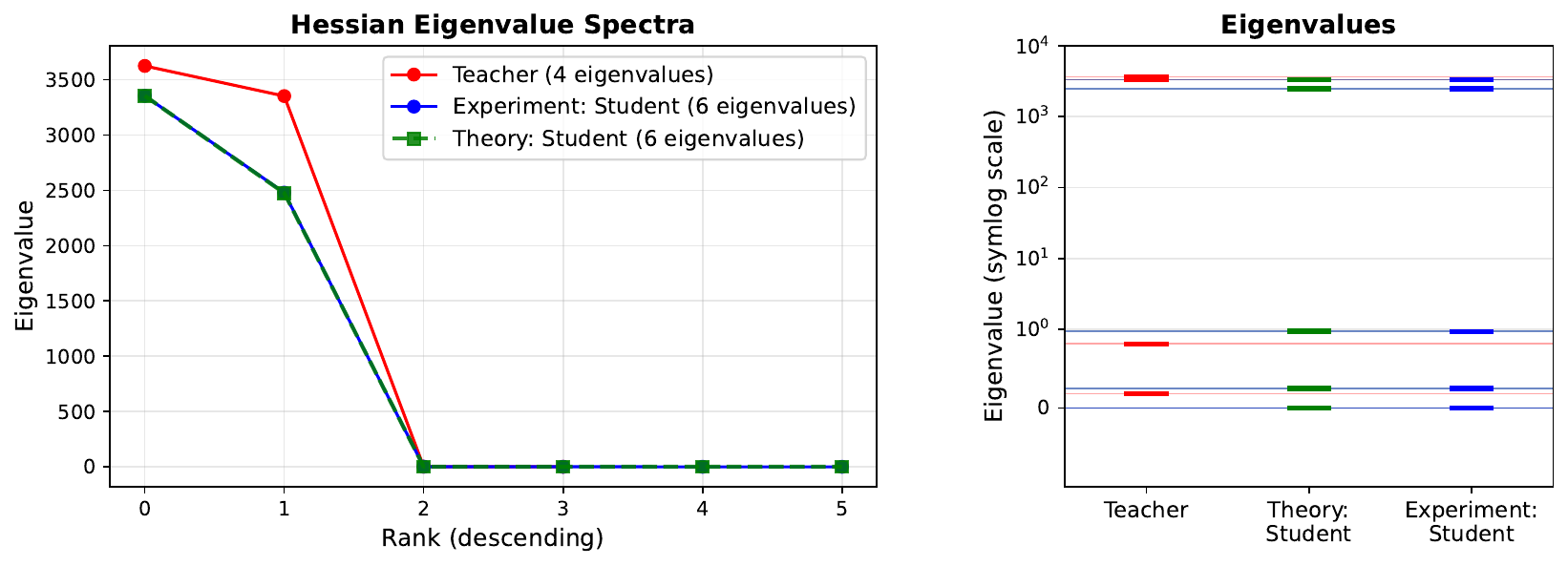}
    \caption{Empirical eigenvalue transformation for a width-2 teacher and width-3 student after optimization. \emph{Left:} ranked eigenvalue spectra. \emph{Right:} eigenvalue comparison. Eigenvalues found experimental align up to $0.2\%$ with those predicted by \cref{thm:precond}. There are two additional flat directions from splitting.}
    \label{fig:hessian_spectrum}
\end{figure}
Let $H \in \mathbb{R}^{a \times a}$ be the full-rank Hessian of the original parameterization and let $H' \in \mathbb{R}^{b \times b}$ be the expanded Hessian obtained after applying an expansion from $a\rightarrow b$ weights. Differentiating the gradient at a global minimum, we can express the expanded Hessian as $H' = B^\top H B$, where $B \in \mathbb{R}^{a \times b}$ is the matrix mapping the gradients of the expanded network back to the original parameters.

\begin{tcolorbox}[title=Example: Hessian Transformation, colback=purple!5]
For a 1-hidden-layer network with 1 hidden neuron, 1 input weight $w$ and 1 output weight $a$, duplicating $a$ and splitting $w$ with coefficients $(\alpha, 1-\alpha)$ gives (writing $\bar\alpha := 1-\alpha$)
\[
H = \begin{bmatrix} h_{ww} & h_{wa} \\ h_{wa} & h_{aa} \end{bmatrix}
\;\to\;
H' = \begin{bmatrix}
h_{ww} & h_{ww} & \alpha h_{wa} & \bar\alpha h_{wa} \\
h_{ww} & h_{ww} & \alpha h_{wa} & \bar\alpha h_{wa} \\
\alpha h_{wa} & \alpha h_{wa} & \alpha^2 h_{aa} & \alpha\bar\alpha h_{aa} \\
\bar\alpha h_{wa} & \bar\alpha h_{wa} & \alpha\bar\alpha h_{aa} & \bar\alpha^2 h_{aa}
\end{bmatrix}.
\]
The two duplicated rows/columns are everywhere scaled by $\alpha$ and $\bar\alpha$ respectively, tracing back to the gradient scaling in \cref{thm:gradient}.
% For uneven splits with coefficient $\alpha$ and $(1-\alpha)$:
% \[
% H' = \begin{bmatrix}
% h_{ww} & h_{ww} & \alpha h_{wa} & (1-\alpha) h_{wa} \\
% h_{ww} & h_{ww} & \alpha h_{wa} & (1-\alpha) h_{wa} \\
% \alpha h_{wa} & \alpha h_{wa} & \alpha^2 h_{aa} & \alpha(1-\alpha) h_{aa} \\
% (1-\alpha) h_{wa} & (1-\alpha) h_{wa} & \alpha(1-\alpha) h_{aa} & (1-\alpha)^2 h_{aa}
% \end{bmatrix}
% \]
\end{tcolorbox}
 % Appendix~\ref{app:vectorize} contains the full derivation and the general $m$-way block form.

% TODO: Theorem 5.3 (Eigenvalue Interlacing) commented out -- proof was invalid
% (B is not semi-orthogonal; Poincare separation does not apply). Need to remake
% fig:hessian_spectrum to show K-predicted eigenvalues vs. empirical, then
% replace this with a correct Ostrowski-type bound or similar.
%
% \begin{theorem}[Eigenvalue Interlacing]
% \label{thm:interlace}
% The eigenvalues of $H'$ interlace with those of $H$: for sorted eigenvalues $\lambda_i(H)$ and $\mu_i(H')$,
% \begin{equation}
% \lambda_i \geq \mu_i \geq \lambda_{n-r+i},
% \end{equation}
% where $r$ is the number of added dimensions. Additionally, $r$ zero eigenvalues are introduced (corresponding to level-sets in the loss).
% \end{theorem}
%
% \begin{proof}
% Since $H' = B^\top H B$ with $B$ semi-orthogonal, the result follows by the Poincar\'{e} separation theorem.
% \end{proof}
\subsection{Overparameterization as Diagonal Preconditioning}
\label{sec:preconditioning}
\begin{theorem}
\label{thm:precond}
    Overparameterization acts as diagonal preconditioning: the transformed Hessian $H'$ has the same nonzero eigenvalues as $HK$, equivalently $K^{1/2} H K^{1/2}$, where $K = BB^\top$ is a diagonal matrix determined by the splitting structure.
\end{theorem}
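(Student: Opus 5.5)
The proof splits into two parts: a general linear-algebra fact about nonzero spectra, and a structural computation showing that $BB^\top$ is diagonal. Throughout I take as given, from \cref{sec:hessians}, that at a global minimum the expanded Hessian is $H' = B^\top H B$, where $B\in\R^{a\times b}$ is the block matrix whose transpose appears in the gradient transformation after \cref{thm:gradient}.

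\textbf{Spectral step.} For any $P\in\R^{a\times b}$ and $Q\in\R^{b\times a}$, the matrices $PQ$ and $QP$ have the same nonzero eigenvalues with multiplicity. This follows from the identity $\det(\lambda I_b - QP)=\lambda^{b-a}\det(\lambda I_a - PQ)$, or from the block-matrix similarity argument. Apply it with $Q = B^\top$ and $P = HB$. Then $H' = B^\top (HB)$ has the same nonzero eigenvalues as $HBB^\top = HK$.

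Next, suppose $K$ is positive semidefinite. Then $HK$ and $K^{1/2}HK^{1/2}$ share nonzero eigenvalues by the same fact, taking $P = K^{1/2}$ and $Q = HK^{1/2}$. This gives the "equivalently" clause of the theorem. It also shows that the nonzero spectrum of $H'$ is real, since $K^{1/2}HK^{1/2}$ is symmetric. Finally, $K$ is positive definite whenever $B$ has full row rank, and this holds because $M^+M=I$ forces each block of $B$ to be surjective.

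\textbf{Diagonality of $K$.} This is the main work. Since $B$ is block diagonal by layer, $K = BB^\top$ is block diagonal as well. Each block has the form
\[
(M_{l-1}^\top\otimes M_l^+)(M_{l-1}\otimes M_l^{+\top}) = (M_{l-1}^\top M_{l-1})\otimes (M_l^+M_l^{+\top}).
\]
The boundary layers have an identity factor in place of one of these. It is therefore enough to show that $M^\top M$ and $M^+M^{+\top}$ are diagonal for each of the allowed expansion/contraction pairs.

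For a duplication matrix $M$, every row is a standard basis vector $\vec e_i^\top$. Hence the columns of $M$ have disjoint supports, and $M^\top M=\diag(m_1,\dots,m_n)$. For the post-activation contraction, $M^+$ places the coefficients of neuron $i$ only in the columns belonging to copies of $i$. Its rows therefore also have disjoint supports, and $M^+M^{+\top}=\diag(\sum_j \alpha_{ij}^2)$.

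The pre-activation pair $(M',M'^+)$ behaves the same way. The roles of splitting and copying are swapped, which gives $M'^\top M'=\diag(\sum_j d_{ij}^2)$ and $M'^+M'^{+\top}=\diag(m_i)$. A Kronecker product of diagonal matrices is diagonal, so $K$ is diagonal with positive entries. For example, an entry for a weight whose input neuron is split by $\alpha_i$ picks up a factor $\sum_j\alpha_{ij}^2$. The diagonal entries of $K$ are determined entirely by the multiplicities and split coefficients.

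\textbf{Main obstacle.} The step I expect to need the most care is that the orientation and Kronecker ordering of $B$ must match the vectorization convention. I also need to handle the per-weight coefficient form and compositions of several expansions across layers. For compositions, $B$ is a product of such block matrices, and I would check that disjoint-support structure is preserved under products, so that $BB^\top$ stays diagonal. If the per-weight $C_i$ variant breaks the Kronecker factorization, I would instead argue directly that each row of $B$ is supported on the copies of a single original weight. Distinct rows then have disjoint supports, which again makes $BB^\top$ diagonal.
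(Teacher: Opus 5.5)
Your proof is correct, and its spectral core matches the paper's: the appendix proof applies Sylvester's determinant identity to $H' = B^\top(HB)$ exactly as you do. The differences lie in the two ingredients the paper leaves unproved.

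First, to pass from $HK$ to $K^{1/2}HK^{1/2}$, the paper uses the similarity $HK = K^{-1/2}(K^{1/2}HK^{1/2})K^{1/2}$ and simply asserts that $K$ is positive definite ``for a valid expansion''. Your second use of the $PQ/QP$ fact needs only $K\succeq 0$, and you justify definiteness separately from the full row rank of the blocks of $B$.

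Second, and more substantively, the paper never shows that $K$ is diagonal. It states this after the theorem. In the proof of \cref{thm:pdet} it reads off the diagonal entries $1$, $m$, $\sum_i\alpha_i^2$ from the rows of $B$, leaving the vanishing of off-diagonal entries implicit in the rows' disjoint supports. Your factorization $(M_{l-1}^\top M_{l-1})\otimes(M_l^+M_l^{+\top})$ turns this into a proof. It also gives the product entries $m_{i'}\sum_j\alpha_{ij}^2$ for middle-layer weights that are split and duplicated at the same time, a case the paper's three-case list does not address.

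Two remarks on scope. Your hypothesis that $M^+$ is supported only on the copies of each neuron is essential, not cosmetic. Read literally, the paper's ``any pseudoinverse with $M^+M = I_n$'' admits $M^+ = \begin{bsmallmatrix} 0.9 & 0.1 & 0 \\ c & -c & 1 \end{bsmallmatrix}$ for the $3\times 2$ duplication example, with any $c\neq 0$. For this choice, $(M^+M^{+\top})_{12} = 0.8c \neq 0$. Moreover, the gradient rule $\nabla_{W_l'}L = M^{+\top}\nabla_{W_l}L$ that underlies $H' = B^\top HB$ already fails. Backpropagation produces $\diag(\sigma'(Mz))\,M^{+\top}$ rather than $M^{+\top}\diag(\sigma'(z))$, with $z$ the pre-activation.

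Conversely, you can drop the per-weight $C_i$ contingency. There the copy-gradient ratio $d_j$ of \cref{app:rowwise} depends on $\delta^{(l+1)}$, and hence on the input. So when the coefficients differ across rows, no constant $B$ with $H' = B^\top HB$ exists at all, and your fallback argument about the row supports of $B$ would have nothing to act on. That variant is excluded anyway by the hypothesis of \cref{thm:gradient} that each coefficient multiplies a full weight vector.
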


For an expansion containing a split weight and its corresponding duplicated weights, $K = BB^\top$ is diagonal. The diagonal entries correspond to split weights (scaled by $m$), duplicated weights with proportions $\alpha_1, \dots, \alpha_m$ (scaled by $\sum_{i=1}^m \alpha_i^2$), and other weights (unchanged, scaled by $1$).
\subsection{Local Volume and Pseudo-Determinant}
\label{sec:volume_properties}
The $\epsilon$-sublevel set of a minimum $\theta^*$ is locally an ellipsoid with volume $\propto (\det H)^{-1/2}$ (\cref{app:ellipsoid_volume}). Since splitting introduces flat symmetry directions, $H'$ is singular, so we measure volume \emph{modulo symmetry directions} via the pseudo-determinant $\pdet(H') = \prod_{i: \lambda_i > 0} \lambda_i$. By \cref{thm:precond}, the nonzero eigenvalues of $H'$ are exactly those of $HK$, giving $$\pdet(H') = \det(HK) = \pdet(H)\det(K).$$
\begin{theorem}[Pseudo-determinant Under Splitting]
\label{thm:pdet}
Let $H'$ be obtained from a full-rank Hessian $H$ by an $m$-way split with coefficients $\alpha_1, \ldots, \alpha_m$ summing to 1, alongside the necessary duplication of subsequent layers. Then $\pdet(H') = m \cdot \sum_i \alpha_i^2 \cdot \pdet(H) \geq \pdet(H)$, with equality iff $\alpha_i = 1/m$ for all $i$.
\end{theorem}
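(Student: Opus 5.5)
The plan is to reduce the statement to \cref{thm:precond} and then compute $\det(K)$ explicitly. Write $H' = B^\top H B$ with $B \in \R^{a\times b}$, $b > a$, and assume $B$ has full row rank $a$; this holds because $B^\top$ is injective, as each original gradient coordinate is recovered from the expanded ones by \cref{thm:gradient}. The nonzero eigenvalues of $B^\top H B$ coincide with those of $HBB^\top = HK$. Since $H$ is full rank and $K$ is positive definite, $HK$ is invertible, so all $a$ of its eigenvalues are nonzero, and $H'$ has exactly $a$ nonzero eigenvalues counted with multiplicity. (To make the Hermitian setting clean, use the similar matrix $K^{1/2}HK^{1/2}$.) Hence
\[
\pdet(H') = \det(HK) = \det(H)\det(K) = \pdet(H)\det(K),
\]
as already noted after \cref{sec:volume_properties}. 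If $H$ is not positive definite, the statement should be read with $\pdet$ as the product of the nonzero eigenvalues.

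It remains to show $\det(K) = m\sum_i \alpha_i^2$. By the remark after \cref{thm:precond}, $K$ is diagonal. Before relying on that, I would verify the diagonal form directly from the block structure of $B^\top$ given in \cref{sec:gradients}. For a single neuron $w$ split $m$ ways, the row of $B$ corresponding to $w$ has entries $\alpha_1,\dots,\alpha_m$ in the $m$ expanded copies, so $K_{ww} = \sum_i \alpha_i^2$. The row of $B$ for the companion weight $a$ (the one that is copied, or split) has $m$ entries equal to $1$, giving $K_{aa} = m$. Every other row is a standard basis vector, giving $1$. Rows belonging to different original weights have disjoint supports among the expanded coordinates, so the off-diagonal entries of $K$ vanish.

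I expect the main obstacle to be bookkeeping in the general setting: one split neuron has a whole vector of incoming weights and a whole vector of outgoing weights. In that case $\det(K)$ could naively pick up factors $(\sum\alpha_i^2)^{d}\, m^{d'}$, where $d$ and $d'$ are the numbers of incoming and outgoing weights. To handle this, I would use the statement's normalization, in which a single coefficient vector multiplies the full weight vector, and read the claim for one scalar split pair, exactly as in the Hessian example of \cref{sec:hessians}. The general case is then the product of such factors. I would flag this convention explicitly and check it against the $2\times2 \to 4\times4$ example by computing $\det(K) = m\sum\alpha_i^2$ there.

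Finally, the inequality follows from Cauchy--Schwarz:
\[
1 = \Bigl(\sum_i \alpha_i\Bigr)^2 \le m\sum_i \alpha_i^2,
\]
with equality iff all $\alpha_i$ are equal, that is, $\alpha_i = 1/m$. Therefore $\pdet(H') = m\sum_i\alpha_i^2\,\pdet(H) \ge \pdet(H)$, with equality exactly for even splits.
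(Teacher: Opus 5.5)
Your proposal is correct and follows essentially the same route as the paper: use \cref{thm:precond} to reduce to $\pdet(H') = \pdet(H)\det(K)$, read off the diagonal entries $1$, $m$, $\sum_i\alpha_i^2$ of $K = BB^\top$, and conclude with the RMS--AM inequality, which is equivalent to your Cauchy--Schwarz step. The bookkeeping issue you flag is exactly what the paper handles by assuming $|S| = |D|$ and pairing each split weight with a duplicated weight, giving $\det(K) = \prod_{d\in D}\bigl(m\sum_i\alpha_{d,i}^2\bigr)$; as you note, this equals the stated factor $m\sum_i\alpha_i^2$ only for a single scalar split pair.
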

Thus,
the volume modulo symmetry directions is preserved under even splits, while uneven splits increase $\pdet$ and shrink the effective local volume.
\section{Creating Better-Conditioned Minima}
\label{sec:trace}
\begin{figure}[t]
    \centering
    \includegraphics[width=\textwidth]{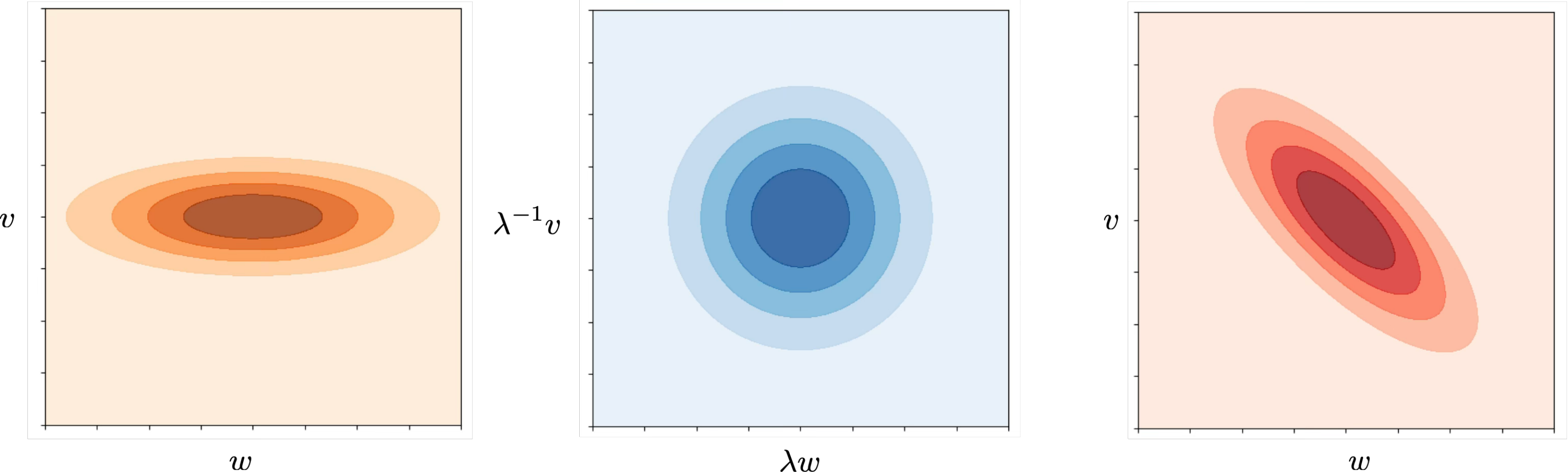}
    \caption{Diagonal preconditioning via symmetry transformations. Left: a poorly conditioned minimum $(w^*, v^*)$ with elongated level sets. Center: rescaling to $(\lambda w^*, v^*/\lambda)$ compresses the $w$-axis and stretches the $v$-axis, producing rounder level sets that are easier to optimize. Right: when the principal curvature directions are not axis-aligned, diagonal rescaling cannot improve conditioning.}
    \label{fig:rescale}
\end{figure}
The symmetry's connection to diagonal preconditioning in \cref{sec:preconditioning} is significant. The condition number $\kappa(H)$ governs standard worst-case convergence bounds for first-order methods. Classical results \citep{forsythe1955best, SLUIS1970} show that diagonal preconditioning improves $\kappa$ when eigenvectors are not element-wise equal in magnitude. While no guarantee of universal improvement exists, diagonal preconditioning is highly effective when eigenvectors are approximately \emph{axis-aligned}. Appendix~\ref{sec:rescale} provides intuition via the simpler rescale symmetry, including a depiction of when diagonal rescaling helps and when it cannot. Adaptive optimizers such as Adam and Adagrad perform implicit diagonal preconditioning via per-coordinate learning rates.

In Appendix~\ref{app:transformers}, we find Hessian eigenvectors of minima after optimization are more axis-aligned than random and detail a symmetry in the Transformer architecture that allows for rotation in their MLP weights.
\subsection{Trace and average-case conditioning}
The worst-case condition number $\kappa(H) = \lambda_{\max}/\lambda_{\min}^+$ is determined by only two eigenvalues and can be dominated by outliers. A more robust measure considers the full spectrum. The ratio of the arithmetic to geometric mean of the positive eigenvalues captures \emph{average-case} conditioning and has been studied as an alternative to $\kappa$ in the context of diagonal preconditioning and iterative methods \citep{doan, jung2024omegaconditionnumberapplicationsoptimal, ghadimi2025newinsightsalgorithmsoptimal}.

While the geometric mean of nonzero eigenvalues (related to pseudo-determinant) is preserved under even splits (\cref{thm:pdet}), the \emph{arithmetic mean} can change. The trace $\tr(H)$ represents the sum of eigenvalues, and $\tr(H)/p$ (where $p$ is the number of positive eigenvalues) is the average curvature in non-symmetry directions.
\begin{figure}[t!]
    \centering
    \includegraphics[width=\textwidth]{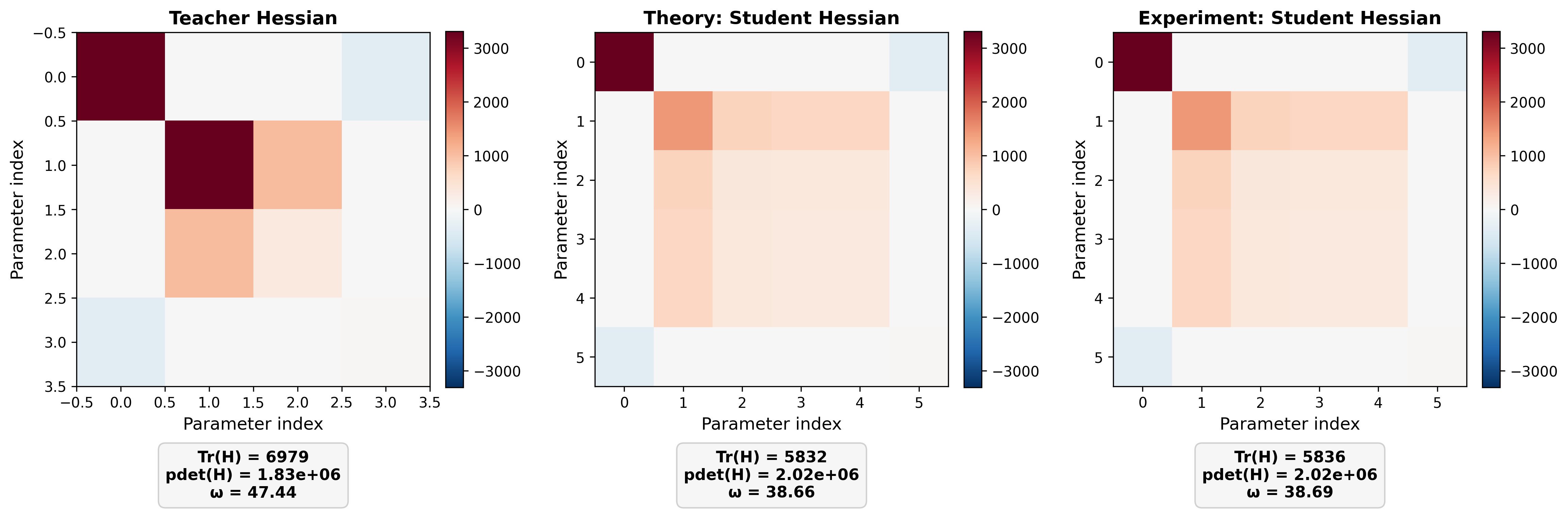}
    \caption{Theoretical and empirical Hessian comparison for a teacher-to-student expansion computed after optimizing to zero loss. \emph{Left:} teacher Hessian. \emph{Centre:} student Hessian predicted by \cref{thm:pdet}. \emph{Right:} empirically measured student Hessian after training to zero loss. The theoretical prediction matches the experiment to within 0.03\% in $\mathrm{Tr}(H)$, $\pdet(H)$, and $\omega$.}
    \label{fig:hessian_theory}
\end{figure}

\begin{theorem}[Trace Formula]
\label{thm:trace}
Let $H'$ be obtained by an order-$m$ even split of a neuron. Let $D$ be the duplicated weights and $S$ the split weights. Then:
\begin{equation}
\tr(H') = \tr(H_{\setminus S,D}) + m \sum_{s \in S} H_{ss} + \frac{1}{m} \sum_{d \in D} H_{dd},
\end{equation}
where $H_{\setminus S,D}$ is the matrix omitting all rows and columns corresponding to weights in $S \cup D$.
\end{theorem}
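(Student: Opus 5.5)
The plan is to reduce the statement to a computation of $\tr(HK)$, using the factorization $H' = B^\top H B$ from \cref{sec:hessians} together with the diagonal structure of $K = BB^\top$ from \cref{thm:precond}. By the cyclic property of the trace,
\[
\tr(H') = \tr(B^\top H B) = \tr(H BB^\top) = \tr(HK).
\]
Since $K$ is diagonal, this becomes $\sum_i H_{ii} K_{ii}$. The whole proof then amounts to identifying the diagonal entries $K_{ii}$ for the three classes of original coordinates: the split weights $S$, the duplicated weights $D$, and all remaining weights.

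To compute $K$, I would read off the rows of $B$ from \cref{thm:gradient}. Recall that $B$ maps gradients of the expanded network back to the original parameters, so the row of $B$ indexed by an original weight records which expanded coordinates it feeds and with what coefficient.
\begin{itemize}
\item For a weight not touched by the expansion, the gradient is unchanged and it has a single image coordinate. Its row of $B$ is a standard basis vector, so $K_{ii}=1$.
\item For a duplicated weight $d \in D$, \cref{thm:gradient} gives $\partial L/\partial w'_i = \alpha_i\, \partial L/\partial w$. The row of $B$ therefore has entries $\alpha_1,\dots,\alpha_m$ on the $m$ copies, and $K_{dd} = \sum_i \alpha_i^2$.
\item For a split weight $s \in S$, the gradients of all $m$ pieces equal the original gradient. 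The row has $m$ ones, so $K_{ss} = m$.
\end{itemize}
Each expanded coordinate descends from exactly one original coordinate, so distinct rows of $B$ have disjoint supports. This is why $K$ is diagonal, as already asserted after \cref{thm:precond}.

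Specializing to the even split $\alpha_i = 1/m$ gives $K_{dd} = m\cdot(1/m^2) = 1/m$. Substituting into $\sum_i H_{ii}K_{ii}$ and grouping the coordinates into $S$, $D$ and the rest yields
\[
\tr(H_{\setminus S,D}) + m\sum_{s\in S} H_{ss} + \frac{1}{m}\sum_{d\in D}H_{dd},
\]
which is the claimed formula.

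The main obstacle I expect is bookkeeping rather than analysis. I must make sure that, for an order-$m$ split of a neuron together with the ``necessary duplication of subsequent layers'', every affected weight falls cleanly into $S$ or $D$, with no original coordinate mapped to both a split and a duplicated role. I must also check that the relation $H' = B^\top H B$ is valid at the point considered. I would handle this by citing the derivation at a global minimum in \cref{sec:hessians}, where the second-order term involving the residual vanishes.

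As a sanity check, the explicit $2\times 2 \to 4\times 4$ example confirms the formula. There the diagonal of $H'$ is $(h_{ww}, h_{ww}, \alpha^2 h_{aa}, \bar\alpha^2 h_{aa})$, whose sum is $2h_{ww} + \tfrac12 h_{aa}$ when $\alpha = \tfrac12$. This agrees with $m=2$, $S=\{w\}$, $D=\{a\}$.
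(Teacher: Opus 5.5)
Your proposal is correct and matches the paper's argument. The paper reads off the diagonal of $H' = B^\top H B$ block by block: each $H_{ss}$ appears $m$ times unscaled, and each $H_{dd}$ appears $m$ times with factor $1/m^2$. It also explicitly notes the equivalent route $\tr(H') = \tr(KH)$ with $K_{ss}=m$, $K_{dd}=\sum_i\alpha_i^2$ and unit entries elsewhere, which is exactly the computation you carry out.
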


By treating the splitting degree $m$ as a continuous variable, we observe that the trace is a convex function of $m$. The new trace $\tr(H')$ remains strictly smaller than the original trace $\tr(H)$ for any degree $1 < m < \frac{\sum_{d \in D} H_{dd}}{\sum_{s \in S} H_{ss}}$, with the optimum $m^*$ derived in Appendix ~\ref{app:trace_proof}.

This suggests that although the volume is unchanged (for even splits), the \textbf{average curvature} in non-symmetry directions can be reduced and redistributed across dimensions.

\paragraph{Average-Case Condition Number.} We formalize the above using the AM-GM inequality for $p$ positive eigenvalues, $\frac{1}{p} \sum_{i=1}^p \lambda_i \geq \left(\prod_{i=1}^p \lambda_i\right)^{1/p}$, with equality iff all $\lambda_i$ are equal. The geometric mean $(\pdet H)^{1/p}$ is fixed under even splits. If we reduce the arithmetic mean $\tr(H)/p$, the eigenvalues become more uniform. The \emph{average-case conditioning ratio} is:
\begin{equation}
\omega(H) = \frac{\frac{1}{p} \sum_{i=1}^p \lambda_i}{\left(\prod_{i=1}^p \lambda_i\right)^{1/p}} = \frac{\tr(H)/p}{(\pdet H)^{1/p}}.
\end{equation}

This ratio equals 1 when all eigenvalues are equal (perfect conditioning) and increases with eigenvalue spread. Since the denominator is fixed under even splits while the numerator (trace) can decrease, overparameterization can improve average-case conditioning. We additionally note \citet{paquette2022homogenizationsgdhighdimensionsexact} show the average-case maximum learning rate goes as $\frac{2}{\tr(H)}$, suggesting overparameterization can allow for convergence with larger learning rates.

\section{Probability of Initializing Near Minima}
\label{sec:volume}
For a network of width $n$ expanded to width $n+1$ under Xavier initialization ($b_m = 1/\sqrt{m}$), the probability of initializing within $\epsilon$ of a global minimum satisfies
\begin{equation}
\label{eq:volume_ratio}
\frac{P_{n+1}}{P_{n}} \;\geq\;
\frac{\sqrt{2}\,\ln(1+\sqrt{2})}{8}
\cdot \|w^\mathrm{out}\|_1\,(n+1)^2 \left(\frac{n+1}{n}\right)^{\!n}
(\epsilon/C)^{1/4},
\end{equation}
where $\|w^\mathrm{out}\|_1 = \sum_{j=1}^n |w_j^\mathrm{out}|$ is the $\ell_1$ norm of the teacher's outgoing weights and $C > 0$ bounds the fourth-order coefficient of the loss along the locally flat direction introduced by splitting. The bound is derived for $d_\mathrm{in} = d_\mathrm{out} = 1$ in Appendix~\ref{app:volume_ratio}, with the higher-dimensional case following the same template.

% Intuitively, the bound combines an initialization-density penalty from the change in prior density with width, a $(n+1)$ permutation factor, an orbit arc-length contribution proportional to $\|w^\mathrm{out}\|_1$, and the $(\epsilon/C)^{1/4}$ width of the locally-flat sublevel set introduced by splitting. 
The right-hand side exceeds $1$ once $\|w^\mathrm{out}\|_1\,(n+1)^2 \gtrsim \epsilon^{-1/4}$, so overparameterization makes global minima strictly more reachable for sufficiently wide networks.
\begin{figure}[t!]
    \centering
    \includegraphics[width=\textwidth]{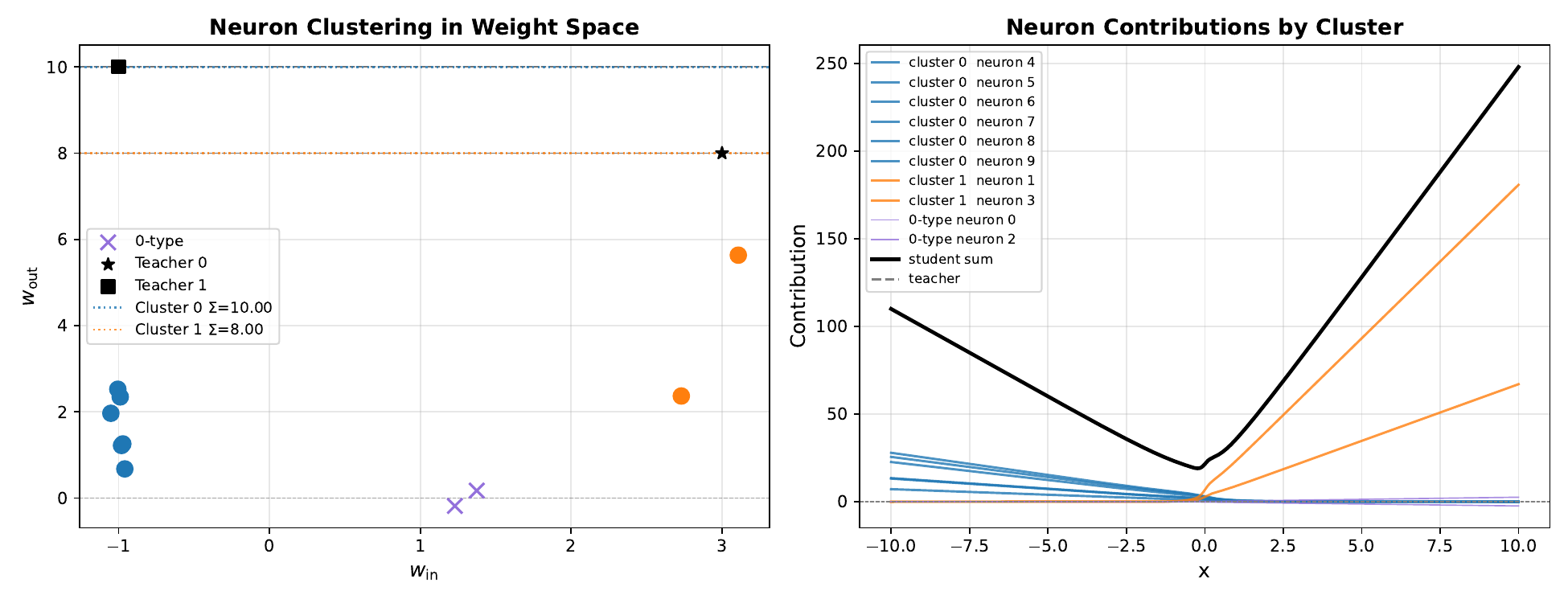}
    \caption{Neuron clustering at a global minimum of a width-10 student trained to match a width-2 teacher (zero loss via L-BFGS). \emph{Left:} student weights in $(w_{\mathrm{in}}, w_{\mathrm{out}})$ space. Neurons cluster into two groups, each with the same $w_{\mathrm{in}}$ as the corresponding teacher neuron and outgoing weights summing to the teacher's value ($\Sigma=10$ and $\Sigma=8$). Two neurons are 0-type (output weights cancel). \emph{Right:} per-cluster neuron contributions confirm each cluster implements one teacher neuron via splitting.}
    \label{fig:clusters}
\end{figure}

\section{Experiments}
\label{sec:experiments}
% We support our theoretical predictions with three families of experiments: small-network theory validation, where global minima are guaranteed, student-teacher experiments across MLP, CNN, and Transformer architectures, and direct data fitting without a teacher. In all cases, we train student networks of increasing width and compute Hessian properties at convergence.
We test our predictions in three settings: small networks with guaranteed convergence to global minima, student-teacher experiments on MLPs, CNNs, and Transformers and direct data fitting. 
% In each, we train students of increasing width and measure Hessian properties at convergence.
% Unless stated otherwise, we use $\mu$P for width-consistent learning rate scaling \cite{yang2022tensor}
% and report results averaged over 3 seeds
% . 
Full details and plots are in Appendix~\ref{app:experiments}.
\subsection{Predictions at Zero Loss}
\label{sec:theory_validation}
We directly test \cref{thm:pdet} by training small networks to zero loss with L-BFGS. To rule out global minima outside the symmetry orbit of the teacher, we use the activation $\sigma_{\alpha,\gamma}(x) = \sigma_{\mathrm{soft}}(x) + \alpha\,\sigma_{\mathrm{sig}}(\gamma x)$ from \citet{simsek2021geometry}, for which all global minima of the teacher--student loss are related by symmetry. \cref{fig:hessian_spectrum} shows the empirical eigenvalue spectrum of the student. \cref{fig:hessian_theory} shows the Hessian predicted by \cref{thm:pdet} matches the empirically measured Hessian to within $0.03\%$ in $\mathrm{Tr}(H)$, $\pdet(H)$, and $\omega$. Post-optimization weights exhibit the predicted structure (\cref{fig:clusters}; see also \cref{fig:neuron_contributions}).
% neurons cluster around teacher $w_{\mathrm{in}}$ values, with outgoing weights summing to the teacher's value and 0-type neurons filling the remainder.
\subsection{Student-Teacher Experiments}
\label{sec:mlp_scale}
\begin{figure}[t!]
    \centering
    \includegraphics[width=\textwidth]{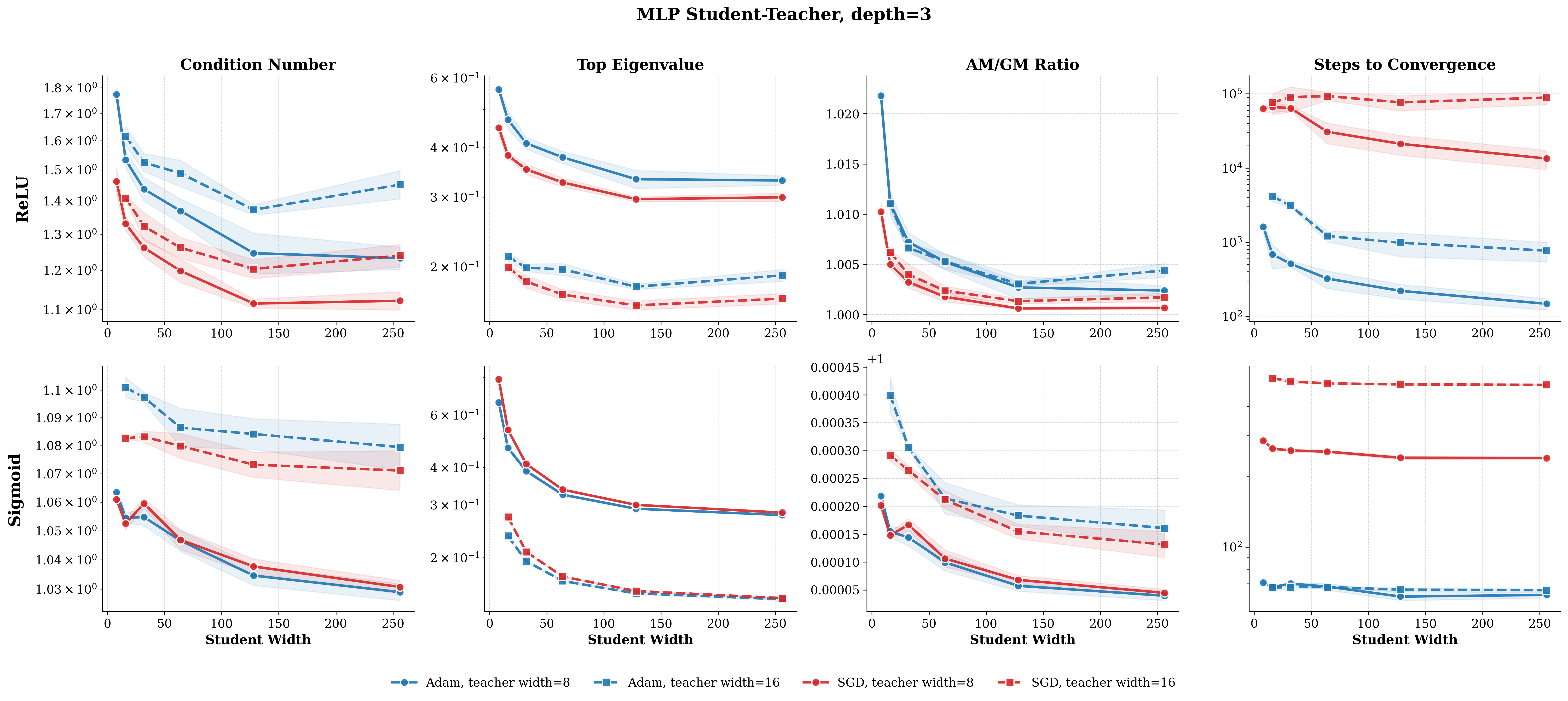}
    \caption{MLP student-teacher experiments (depth 3). Students use ReLU (top rows) and sigmoid (bottom rows), trained with Adam and SGD. Condition number, top eigenvalue, AM/GM ratio, and steps to convergence all improve monotonically with width.}
    \label{fig:mlp_cnn}
\end{figure}
These experiments test whether the Hessian improvements predicted by our theory persist at practical scale and across architectures. We train student networks of increasing width and compute Hessian properties at convergence. \cref{fig:mlp_cnn} shows MLP students trained with SGD and Adam: condition number, top eigenvalue, AM/GM ratio, and steps to convergence all improve monotonically with width. The same trends hold across depths (~\cref{fig:mlp_depth}). CNN and Transformer results (\cref{fig:cnn_lm}) show the same improvement; our theory applies directly to the MLP subspace of Transformer weights, which comprises the majority of parameters.
\FloatBarrier
\subsection{Direct Data Fitting}
\label{sec:broader}
To test whether the improvements depend on the teacher-student setup, we train networks directly on data. \cref{fig:data_fitting} shows MLPs on California Housing regression and ConvNets on CIFAR-10 classification; the model cannot fully overfit in the latter setting. Conditioning improves with width in both cases, which suggests our finding may continue to be relevant in the mildly underparameterized regime.
\begin{figure}[t]
    \centering
    \includegraphics[width=\textwidth]{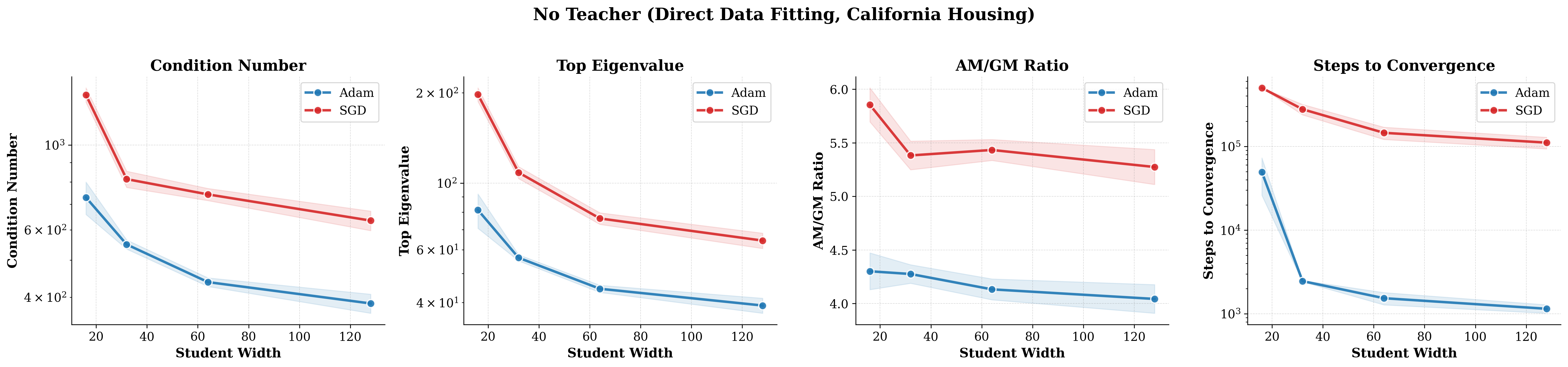}\\[4pt]
    \includegraphics[width=\textwidth]{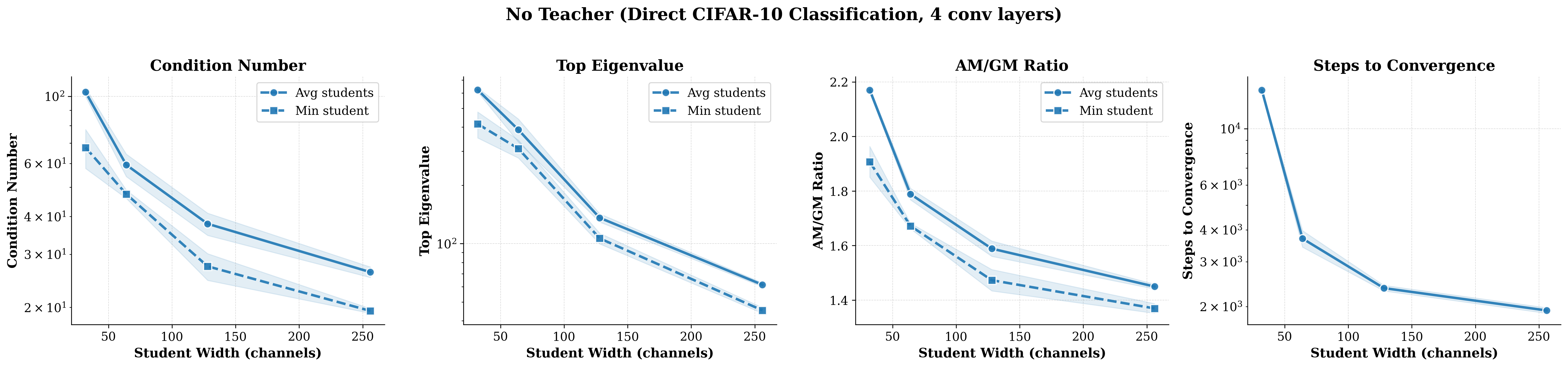}
    \caption{\emph{Top:} MLPs trained directly on California Housing regression (no teacher). \emph{Bottom:} ConvNets trained directly on CIFAR-10 classification (model cannot fully overfit). Conditioning improves with width in both settings.}
    \label{fig:data_fitting}
\end{figure}
\section{Discussion}
\label{sec:discussion}
\subsection{Implications for Optimization}
Our analysis is an existence result: we characterize which minima exist and their geometric properties, rather than proving which minima gradient descent converges to. In practical optimization, early stopping within a well-conditioned basin suffices and overparameterization makes such basins more abundant.
% This distinction is deliberate.\Cref{thm:trace} and \cref{thm:pdet} establish that overparameterization guarantees the existence of minima with improved conditioning and sufficient volume, while \cref{sec:volume} shows these minima occupy increasing probability mass near initialization.
Poorly-conditioned minima with small volume may also exist, but they are both harder to reach and harder to converge to, so optimization algorithms tend to naturally bypass them. Stochastic gradient noise provides additional selection pressure, destabilizing convergence in sharp, poorly-conditioned basins.

This is corroborated by \cref{fig:compare}, where the small gap between the mean and best student conditioning suggests that the existence of well-conditioned minima improves practical convergence. A possible connection to plasticity loss in continual learning is discussed in Appendix~\ref{app:plasticity}.
% This suggests a principle: \textbf{architecture enables, optimization selects}. Overparameterization expands the symmetry orbit to include well-conditioned solutions allowing the optimizer to converge to them preferentially.
\subsection{Implications for Generalization}
Our framework may also shed light on generalization. Consider noisy training data $\{x_i, y_i\}$ generated by some \textbf{simple} ground truth function $f$ with noise: $y_i = f(x_i) + \zeta$. The width required to overfit on $\{x_i, y_i\}$ is typically much larger than the width needed to represent $f$. As a result, the minima corresponding to $f$ have larger symmetry orbits and therefore greater total volume than those corresponding to overfitting solutions.
% This provides a counting argument for why overparameterized networks generalize: they converge to simpler functions not because they have lower loss, but because their minima are more numerous and easier to optimize.

Figure \ref{fig:data_fitting} Bottom provides a first look at direct data fitting and the underparameterized regime, where we can see the predicted trends continue to hold. In this setting, though we are not overparameterized, there exist sub-networks corresponding to regions of the weight space representing simpler functions (Figure \ref{fig:groupoid}).
% This is consistent with the observation of \citet{dinh2017sharp} that not all generalizing minima are wide; our analysis suggests that the minima we converge to \emph{tend to be} wide because of their favorable optimization properties.
% \subsection{Wideness of Minima and Anthropic Principle in Optimization}

There is an old idea in machine learning that generalizing minima are wide \citep{hochreiter1997flat}. However, considering parameter space symmetries it is clear this picture is incomplete \citep{dinh2017sharp}: using symmetry transformations we can construct narrow minima that generalize. A more plausible explanation is that the implication is in the other direction: wide minima represent simpler functions and simpler functions generalize. This suggests the wonderful coincidence of deep learning is that these generalizing minima also happen to be favored by our optimization process.
\subsection{Limitations and Future Work}
\label{sec:limitations}
Our analysis is restricted to width-wise symmetries. Extending the framework to depth-wise symmetries is a natural next direction. The relation $H' = B^\top H B$ also assumes zero-loss minima, so a perturbative analysis of the Hessian around small non-zero loss would extend our conditioning and volume results to the mildly underparameterized regime probed in \cref{fig:data_fitting}. Finally, our analysis is primarily about existence of good minima and we hope to connect our theory to practical training dynamics in future work.
\section{Conclusion}
We show that overparameterization benefits optimization through two distinct mechanisms: diagonal preconditioning of the Hessian and increased volume of minima near initialization. Our theoretical results are data and optimizer agnostic, depending only on the symmetry structure of the architecture which prescribes the geometry of the loss landscape.

\bibliographystyle{plainnat}
\bibliography{example_paper}

\newpage
%%%%%%%%%%%%%%%%%%%%%%%%%%%%%%%%%%%%%%%%%%%%%%%%%%%%%%%%%%%%%%%%%%%%%%%%%%%%%%%
%%%%%%%%%%%%%%%%%%%%%%%%%%%%%%%%%%%%%%%%%%%%%%%%%%%%%%%%%%%%%%%%%%%%%%%%%%%%%%%
\appendix

\section{Experimental Appendix}
\label{app:experiments}

\subsection{Architecture}

For the depth-1 sweep, networks are single-hidden-layer MLPs ($L=1$) with input and output dimension equal to the teacher width ($d_{\mathrm{in}} = d_{\mathrm{out}} = 8$). Deeper sweeps and other experiment families use the architectures listed in \cref{tab:hyperparams}. The teacher has hidden width $n_f = 8$; student widths are $n \in \{8, 16, 32, 64\}$, corresponding to overparameterization ratios $n/n_f \in \{1, 2, 4, 8\}$. We test both ReLU and sigmoid activations.

\subsection{Training}
\begin{figure}[t!]
    \centering
    \includegraphics[width=\textwidth]{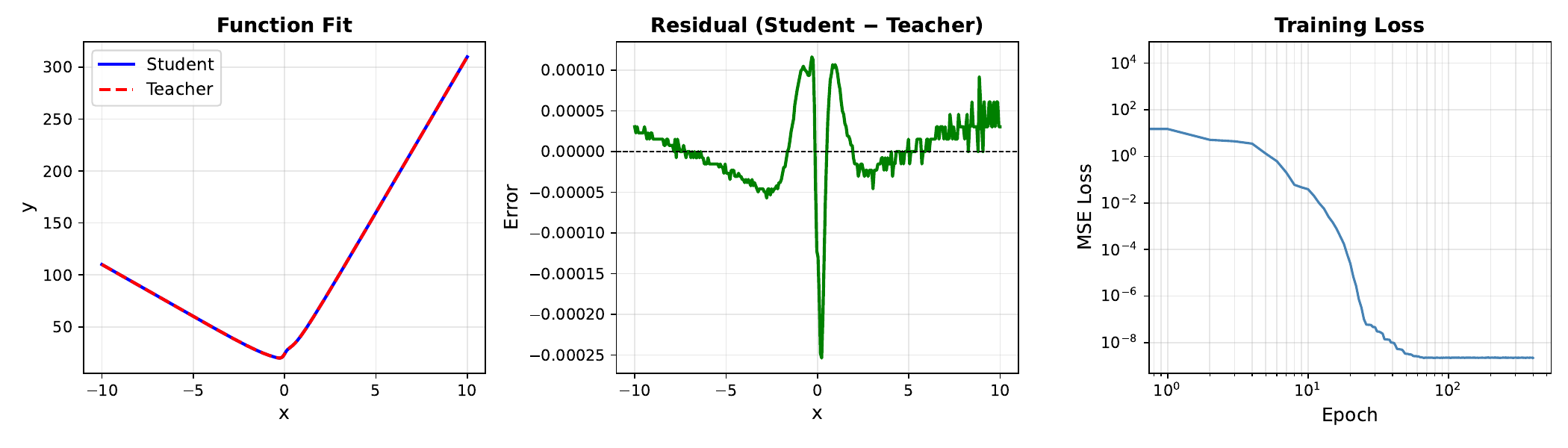}
    \caption{Training a width-3 student trained to match a width-2 teacher to zero loss via L-BFGS. \emph{Left:} Student aligns exactly with teacher. \emph{Centre:} There is a small residual error. \emph{Right:} Loss curve using L-BFGS.}
    \label{fig:neuron_contributions}
\end{figure}

\begin{figure}[t!]
    \centering
    \includegraphics[width=\textwidth]{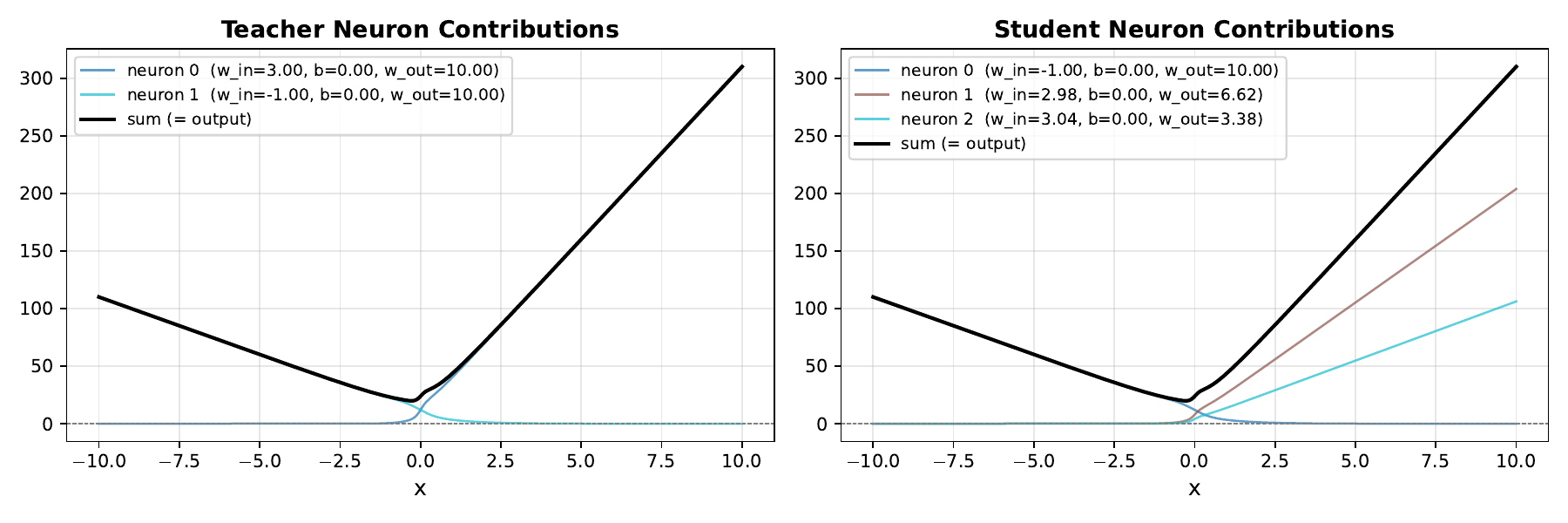}
    \caption{Weight structure at a global minimum of a width-3 student trained to match a width-2 teacher (zero loss via L-BFGS). \emph{Left:} teacher neuron contributions. \emph{Right:} student neuron contributions. Student neuron 0 replicates teacher neuron 1 exactly ($w_{\mathrm{in}}=-1.00$), while neurons 1 and 2 split teacher neuron 0 ($w_{\mathrm{in}}\approx 3.00$) with outgoing weights $6.62+3.38=10.00$, matching the predicted splitting structure.}
    \label{fig:neuron_contributions}
\end{figure}

\begin{figure}[t!]
    \centering
    \includegraphics[width=\textwidth]{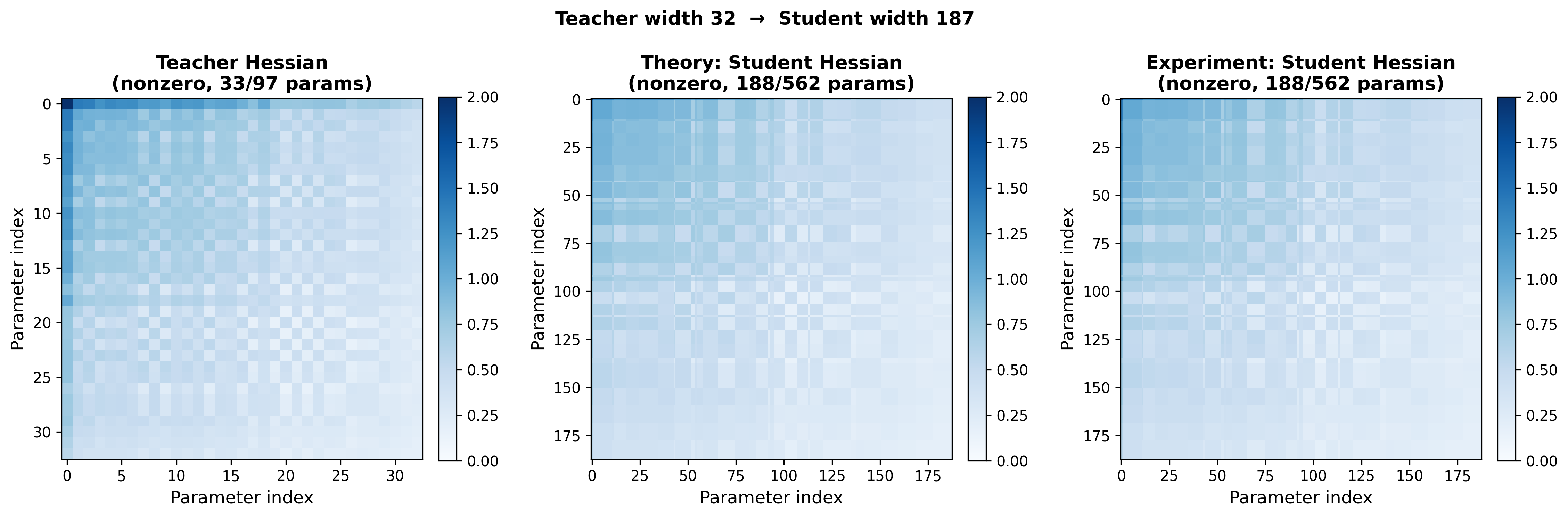}
    \caption{Theoretical and empirical Hessian comparison at larger scale computed by direct application of a symmetry transformation (teacher width 32 $\to$ student width 187, sigmoid activation). \emph{Left:} teacher Hessian. \emph{Centre:} student Hessian predicted by \cref{thm:pdet}. \emph{Right:} empirically measured student Hessian. Hessian entries agree nearly exactly, implying the prediction scales to larger networks.}
    \label{fig:hessian_theory_large}
\end{figure}

\begin{figure}[t!]
    \centering
    \includegraphics[width=\textwidth]{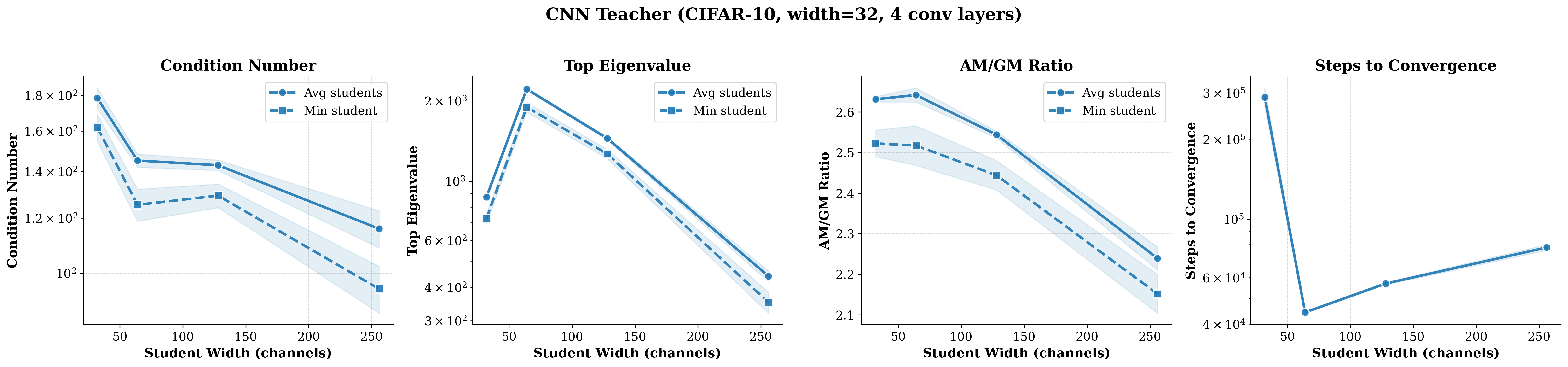}\\[6pt]
    \includegraphics[width=\textwidth]{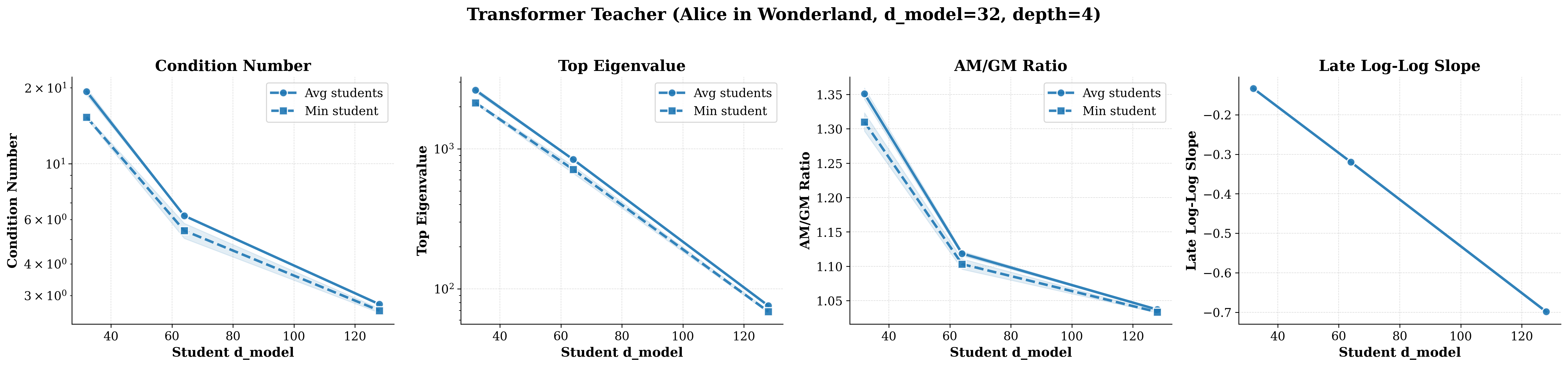}
    \caption{\emph{Top:} CNN student-teacher experiment. A ConvNet teacher (width 32, 4 conv layers) is pre-trained on CIFAR-10. Student ConvNets of increasing channel width are trained to match the teacher's output logits. \emph{Bottom:} Transformer student-teacher experiment. A character-level decoder-only Transformer teacher ($d_{\mathrm{model}}=32$, depth 4) is pre-trained on Alice in Wonderland. Student Transformers of increasing width are trained to match the teacher's output distribution via KL divergence. In both settings, condition number, top eigenvalue, and AM/GM ratio improve with width, alongside convergence behavior (steps to convergence for the CNN, late log-log slope for the Transformer).}
    \label{fig:cnn_lm}
\end{figure}

\begin{figure}[t!]
    \centering
    \includegraphics[width=0.5\textwidth]{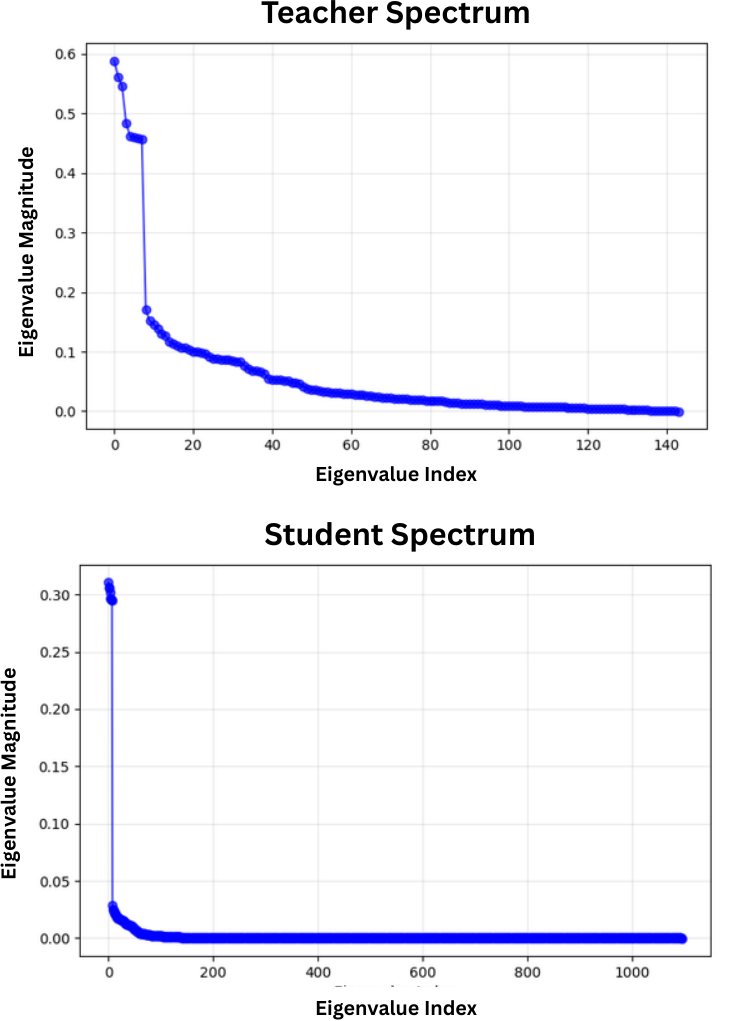}
    \caption{Hessian eigenvalue spectrum for a teacher network (top, width $n_f = 8$) and an overparameterized student (bottom, width $n = 64$). The teacher spectrum decays smoothly, while the student spectrum concentrates its curvature in a small number of leading eigenvalues with the vast majority near zero, corresponding to the flat directions introduced by overparameterization symmetries.}
    \label{fig:spectrum}
\end{figure}

\begin{figure}[t!]
    \centering
    \includegraphics[width=\textwidth]{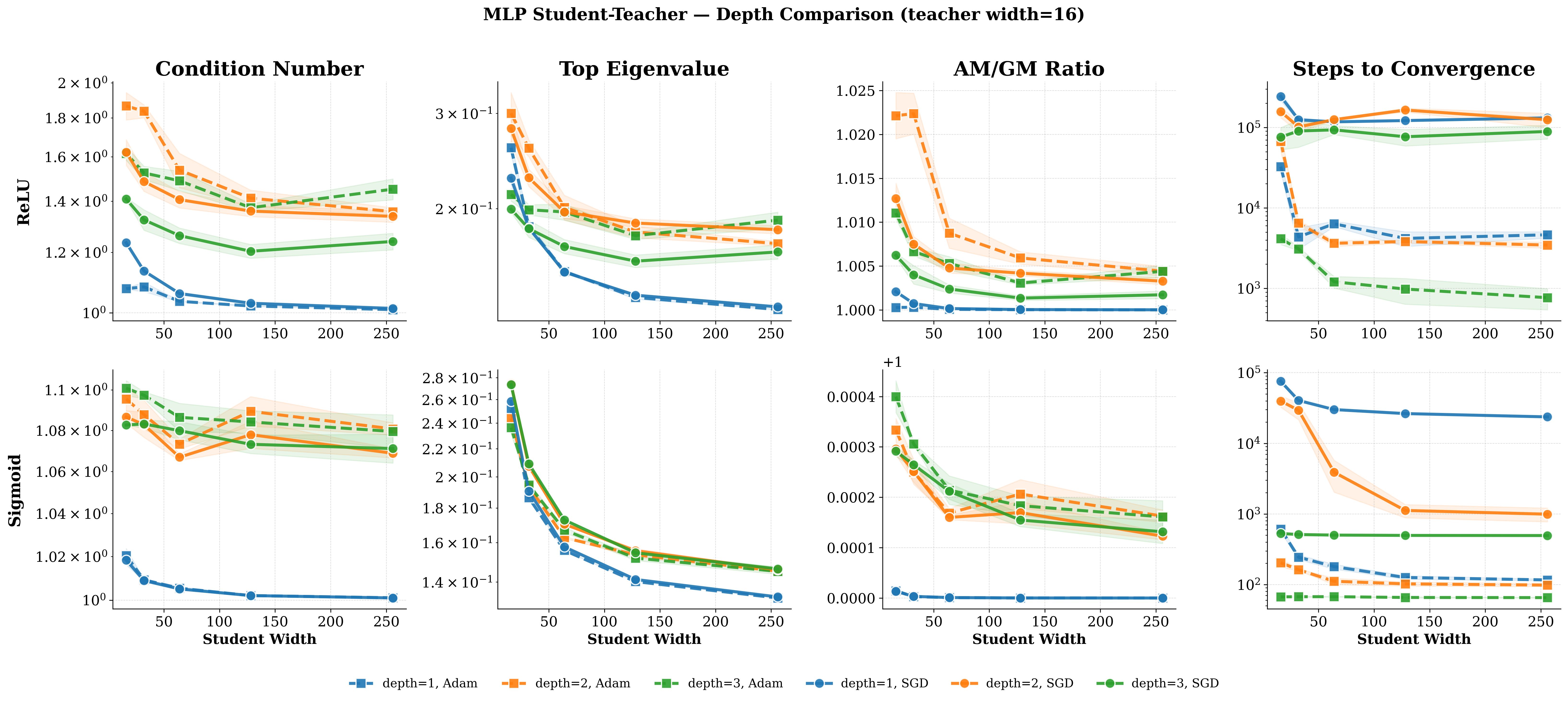}
    \caption{Effect of depth on MLP student-teacher conditioning (teacher width 16). Students at depth 1, 2, and 3 are trained with Adam and SGD (ReLU top, sigmoid bottom), with student width up to 256. Conditioning improves with width at all depths with no clear depth dependence, suggesting the symmetry-based mechanism is robust to depth.}
    \label{fig:mlp_depth}
\end{figure}

\begin{figure}[t!]
    \centering
    \includegraphics[width=\textwidth]{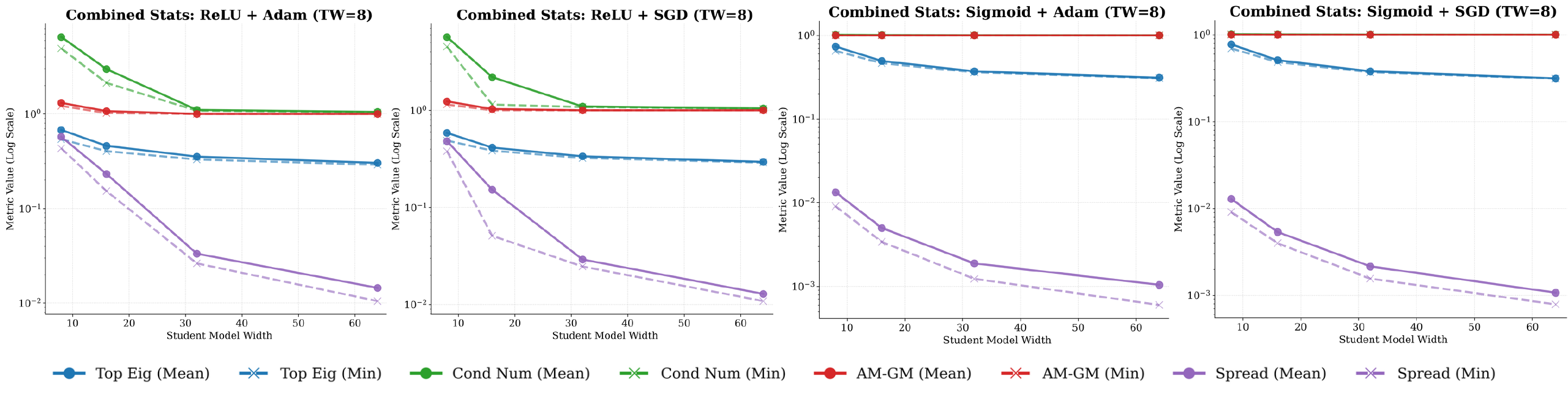}
    \caption{A comparison of conditioning across the average and best conditioned students. Training is done with 1 teacher and 50 student models across 5 seeds (an extended sweep, distinct from the main MLP setup in \cref{tab:hyperparams}).}
    \label{fig:compare}
\end{figure}

\begin{figure}[t!]
    \centering
    \includegraphics[width=\textwidth]{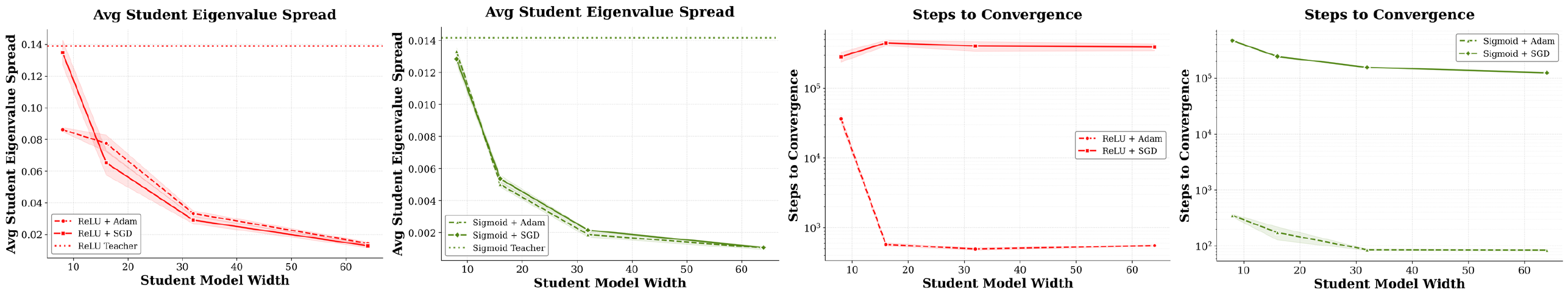}
    \caption{Additional Plots for Teacher-Student Network Training. Eigenvalue spread and steps to convergence.}
    \label{fig:additional}
\end{figure}

\paragraph{Synthetic MLP student-teacher.}
For each configuration, we train 5 independent student networks in parallel
to match the frozen teacher's outputs. At each training step, a fresh batch
of inputs is sampled from $\mathcal{N}(0, I)$ and the teacher generates
targets; each student sees the same batch. The loss is the mean squared
error between student and teacher outputs.

We use maximal update parameterization $\mu$P \cite{yang2022tensor} for
width-consistent learning dynamics, with base width 8. We test two
optimizers:
\begin{itemize}
    \item \textbf{SGD}: learning rate $0.1$, maximum 50{,}000 epochs.
    \item \textbf{Adam}: learning rate $0.01$, maximum 5{,}000 epochs.
\end{itemize}
Training terminates early if the loss falls below $5 \times 10^{-5}$ or if
no improvement is observed for 20{,}000 consecutive epochs.

\paragraph{MLP on California Housing.}
As a no-teacher counterpart to the synthetic experiment, students are trained directly to predict median house value from the 8 standardised California Housing features, on a fixed 200-sample training subset. All other hyperparameters --- $\mu$P, optimizer pair, learning rates, epoch budgets, early-stopping --- match the synthetic experiment.

\paragraph{CNN on CIFAR-10.}
The architecture is a 4-block ConvNet (Conv $\to$ BN $\to$ ReLU, with
$2{\times}2$ MaxPool every two blocks) followed by global average pooling
and a linear head. Channel widths $\{32, 64, 128, 256\}$ are swept. Two
settings share hyperparameters except for the loss: \emph{teacher--student},
matching the logits of a pretrained width-32 teacher under MSE
(early-stop $5 \times 10^{-3}$), and \emph{direct}, predicting CIFAR-10
labels under cross-entropy (early-stop $0.4$). Both use Adam at $10^{-3}$,
batch size 256, maximum 5{,}000 epochs.

\paragraph{Character-level transformer on Alice in Wonderland.}
The teacher is a 4-layer, 4-head encoder-only transformer with
$d_{\text{model}}{=}32$, learned absolute positional embeddings, and a
73-character vocabulary, pretrained on the Alice corpus tokenised into
128-character windows. Students share the architecture but vary
$d_{\text{model}} \in \{32, 64, 128\}$ and minimise per-token KL divergence
to the teacher's logits. We use Adam at $10^{-3}$, batch size 256, up to
50{,}000 epochs, and compute the top 50 Hessian eigenvalues (rather than
100) per student.

\subsection{Compute Resources}

All experiments are run on NVIDIA L40S GPUs. A single seed of a single configuration takes between roughly 5 minutes (smallest synthetic MLP runs) and 12 hours (largest Transformer student-teacher runs).

\subsection{Hessian Computation}

After convergence, we compute the Hessian eigenspectrum using the Lanczos algorithm with full reorthogonalization, which efficiently approximates the eigenvalue distribution via Hessian-vector products without forming the full matrix. For small networks used in theory validation we additionally compute the full Hessian exactly via finite differences to verify the Lanczos estimates. We extract: the complete eigenvalue spectrum, trace, top eigenvalue, condition number ($\lambda_{\max} / \lambda_{\min}^+$), pseudo-determinant (product of nonzero eigenvalues), and the AM/GM ratio of positive eigenvalues. To handle Lanczos numerical noise, eigenvalues below a small magnitude threshold are set to zero. Gradient norms are also recorded at the final iterate.
\subsection{Hyperparameters}

\begin{table}[h]
\centering
\small
\setlength{\tabcolsep}{4pt}
\begin{tabular}{lcccc}
\toprule
\textbf{Parameter} & \textbf{MLP s--t} & \textbf{MLP Housing} & \textbf{CNN}\textsuperscript{$*$} & \textbf{LM s--t} \\
\midrule
Architecture            & MLP                            & MLP            & ConvNet (4 conv) & CharTransformer \\
Dataset                 & synthetic                      & Cal.\ Housing  & CIFAR-10         & Alice in Wonderland \\
Teacher                 & random init                    & none (direct)  & pretrained       & pretrained \\
Teacher width $n_f$     & $\{8,16,32,64\}^{\dagger}$     & 16             & 32 channels      & 32 ($d_{\text{model}}$) \\
Student widths $n$      & $n_f\!\cdot\!\{1,2,4,8,16\}$   & $\{16,32,64,128\}$ & $\{32,64,128,256\}$ & $\{32,64,128\}$ \\
Depth                   & $\{1,2,3\}^{\dagger}$          & 2              & 4                & 4 \\
Input / output dim.     & $n_f / n_f$                    & 8 / 1          & $3{\times}32^2$ / 10 & vocab=73 \\
Activation              & ReLU, sigmoid                  & ReLU           & ReLU             & ReLU FFN, 4 heads \\
\# students             & 5                              & 25             & 5                & 5 \\
Optimizer(s)            & SGD, Adam                      & SGD, Adam      & Adam             & Adam \\
LR (SGD / Adam)         & 0.1 / 0.01                     & 0.1 / 0.01     & $10^{-3}$        & $10^{-3}$ \\
Max epochs (SGD/Adam)   & 50k / 5k                       & 50k / 5k       & 5k               & 50k \\
Loss                    & MSE                            & MSE            & MSE / CE\textsuperscript{$*$} & KL on logits \\
Batch size              & online                         & 200            & 256              & 256 \\
Early-stop loss         & $5{\times}10^{-5}$             & $5{\times}10^{-5}$ & $0.005$/$0.4$\textsuperscript{$*$} & --- \\
Patience (epochs)       & 20k                            & 20k            & 20k              & 20k \\
Param.                  & $\mu$P (base 8)                & $\mu$P (base 8)& standard         & standard \\
Top-$k$ Hessian eigs.   & 100                            & 100            & 100              & 50 \\
Seeds                   & 0, 1, 2                        & 0, 1, 2        & 0, 1, 2          & 0, 1, 2 \\
\bottomrule
\end{tabular}
\caption{Experimental hyperparameters.
\textsuperscript{$\dagger$} For the synthetic MLP student-teacher, depth 1 is
swept over $n_f \in \{8,16,32,64\}$ and depths 2--3 over $n_f \in \{8,16\}$.
\textsuperscript{$*$} CNN sweep covers two settings sharing all
hyperparameters except loss/early-stop: \emph{teacher--student} (MSE vs.\
pretrained ConvNet, stop $0.005$) and \emph{direct} (cross-entropy vs.\
CIFAR-10 labels, stop $0.4$). MLP Housing uses 200 standardized real samples;
CNN uses the full CIFAR-10 training split; LM uses the Alice corpus tokenised
into 128-character non-overlapping windows.}
\label{tab:hyperparams}
\end{table}

\section{Proof Details}
\label{app:proofs}

\subsection{Proof of \cref{thm:gradient} (Gradient Scaling)}
    Using the standard notation
    \[
        a^{(l)} = w^{(l)} h^{(l-1)} + b^{(l)}, \qquad
        h^{(l)} = \sigma(a^{(l)})
    \]
    for the pre-activations and activations at layer $l$. Error backpropagates as

    $$\delta^{(l-1)} = \sigma' \circ (W^l)^T \cdot \delta^{(l)}.$$
    The gradient of the weights are written
    \[
        \frac{\partial L}{\partial w^{(l)}_{ji}} = \delta^{(l)}_j \, h^{(l-1)}_i.
    \]

    For post-activation splitting, we duplicate the activations in layer $l$ and split the outgoing weights in layer $l+1$. The gradient of the outgoing split weights is unaffected since $\delta^{(l+1)}_j$ and $h^{(l)}_i$ are fixed. The gradient of the incoming duplicated weights in layer $l$ is divided by $m$ since $\delta^{(l)}_j$ is split.

    For pre-activation splitting, we split the activations in layer $l$ and duplicate the outgoing weights in layer $l+1$. The gradient of the incoming split weights is unaffected since $\delta^{(l)}_j$ and $h^{(l-1)}_i$ are fixed.

    The gradient of the outgoing duplicated weights in layer $l+1$ is divided by $m$ since $h^{(l)}_i$ is split.

    Finally, we can see downstream gradients are not impacted since $M^T\cdot \sigma' \cdot M^{+T} = \sigma'$:
    $$\delta^{(l-1)} = \sigma' \circ (MW^l)^T \cdot \sigma' \circ (W^{l+1}M^+)^T \cdot \delta^{(l+1)} = \sigma' \circ (W^l)^T \cdot \sigma' \circ (W^{l+1})^T \cdot \delta^{(l+1)}.$$

\subsection{Proof of \cref{thm:precond} (Diagonal Preconditioning)}

Applying Sylvester's Identity and letting $K = BB^\top \in \mathbb{R}^{a \times a}$, the characteristic polynomial of $H'$ is
\begin{equation}
\label{eqn:det_eig}
\det(H' - \lambda I_b) = (-\lambda)^{b-a}\det(HK - \lambda I_a) = (-\lambda)^{b-a}\det(K^{1/2} H K^{1/2} - \lambda I_a)
\end{equation}
by similarity, since $K$ is positive-definite for a valid expansion. This shows that $H'$ has exactly $b-a$ zero eigenvalues, and its remaining $a$ nonzero eigenvalues are identical to those of the full-rank $a \times a$ matrix $HK$.

\subsection{Proof of \cref{thm:pdet} and Associated Theorems}

To evaluate the change in the pseudo-determinant under expansion, we first explicitly characterize the diagonal matrix $K = BB^\top \in \mathbb{R}^{a \times a}$. As established in the main text, the expanded Hessian is $H' = B^\top H B$, and the nonzero eigenvalues of $H'$ match those of $HK$.

The matrix $B \in \mathbb{R}^{a \times b}$ maps the gradients of the $b$ expanded parameters back to the $a$ original parameters. Because $K = BB^\top$, the diagonal entries of $K$ correspond exactly to the inner products of the rows of $B$:
\begin{itemize}
    \item \textbf{Untouched weights:} The gradient mapping is a standard basis vector. The inner product is $1$, yielding a diagonal entry of $1$.
    \item \textbf{Split weights ($s \in S$):} a weight whose split copies have values $\alpha_i$ times the original (with $\sum_i \alpha_i = 1$). By \cref{thm:gradient}, each copy has \emph{unchanged} gradient, so the row in $B$ contains $m$ ones, yielding a diagonal entry of $m$.
    \item \textbf{Duplicated weights ($d \in D$):} a weight whose copies are \emph{identical} to the original, but whose copy gradients are scaled by the splitting proportions $\alpha_{d,i}$ (\cref{thm:gradient}). The row in $B$ contains the $\alpha_{d,i}$ values, yielding a diagonal entry of $\sum_{i=1}^m \alpha_{d,i}^2$.
\end{itemize}

Because $K$ is a diagonal matrix, its determinant is simply the product of these diagonal entries. Assuming the expansion involves an equal number of split and duplicated weights ($|S| = |D|$), we can pair the scaling factors of each split weight with a duplicated weight. The determinant simplifies to:
$$
\det(K) = 1^{|O|} \cdot \prod_{s \in S} m \cdot \prod_{d \in D} \left( \sum_{i=1}^m \alpha_{d, i}^2 \right) = \prod_{d \in D} \left( m \sum_{i=1}^m \alpha_{d, i}^2 \right)
$$
More generally, the first form $\det(K) = m^{|S|} \prod_{d \in D}(\sum_i \alpha_{d,i}^2)$ gives $\det(K) = m^{|S| - |D|}$ under even splitting, so volume is preserved only when $|S| = |D|$.

With the scaling factor $\det(K)$ explicitly defined, we now formally prove the invariance under even splitting and the volume expansion under uneven splitting.

\begin{theorem}\label{thm:pdet_invar}
    The pseudo-determinant of a full-rank Hessian $H$ is invariant under even-splitting transformations, assuming an equal number of split and duplicated weights ($|S|=|D|$).
\end{theorem}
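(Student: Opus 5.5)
The plan is to reduce everything to the identity $\pdet(H') = \det(HK) = \pdet(H)\det(K)$ and then show that $\det(K)=1$ for an even split. By \cref{thm:precond}, the characteristic polynomial of $H' = B^\top H B$ factors as $(-\lambda)^{b-a}\det(HK-\lambda I_a)$. Since $H$ is full rank and $K$ is positive definite for a valid expansion, $HK$ is invertible. So $H'$ has exactly $b-a$ zero eigenvalues, and its nonzero eigenvalues are precisely the $a$ eigenvalues of $HK$, counted with multiplicity. Taking the product of these nonzero eigenvalues gives $\pdet(H') = \det(HK) = \det(H)\det(K)$, and $\det(H) = \pdet(H)$ because $H$ is full rank.

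It remains to compute $\det(K)$ for an even split. We use the diagonal form of $K = BB^\top$ established just above:
\begin{itemize}
\item untouched weights contribute $1$;
\item each split weight $s\in S$ contributes $m$, since its row of $B$ has $m$ unit entries by \cref{thm:gradient};
\item each duplicated weight $d\in D$ contributes $\sum_{i=1}^m \alpha_{d,i}^2$.
\end{itemize}
Under an even split, $\alpha_{d,i} = 1/m$, so each duplicated entry equals $m\cdot(1/m)^2 = 1/m$. Hence
\[
\det(K) = m^{|S|}\, m^{-|D|} = m^{|S|-|D|}.
\]
With the hypothesis $|S| = |D|$, this gives $\det(K) = 1$, and therefore $\pdet(H') = \pdet(H)$.

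The one step that needs care is confirming that $K$ really is diagonal, which is what lets $\det(K)$ be read off as a product of diagonal entries. Concretely, the rows of $B$ belonging to distinct original parameters must have disjoint supports among the expanded coordinates. I would check this from the block form $B^\top = \mathrm{diag}(\dots, M_{l-1}\otimes (M_l^+)^\top, \dots)$. The duplication and splitting matrices have one nonzero per column of $M$, and each expanded coordinate traces back to a single original coordinate, so the rows of $B$ have disjoint supports and the off-diagonal inner products vanish.

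When a parameter is both split (as an incoming weight of one split) and duplicated (as an outgoing weight of an adjacent split), its diagonal entry is the product of the two factors. This is consistent with the multiplicative count above, so the pairing $|S| = |D|$ still yields $\det(K) = 1$. I would also note that the result extends to compositions of several even splits, since $K$ multiplies along the groupoid composition and each factor has unit determinant.
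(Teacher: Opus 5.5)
Your argument is correct and is essentially the paper's proof. Both of you reduce to $\pdet(H')=\pdet(H)\det(K)$ via \cref{thm:precond}, then read off $\det(K)=m^{|S|}m^{-|D|}=1$ from the diagonal entries $1$, $m$, $1/m$ of $K$. Your check that $K$ is diagonal is a worthwhile addition, since the paper only asserts it. Fix one slip there, though. $M$ has one nonzero per \emph{row}, not per column, and $M^+$ has one nonzero per column. Hence every row of each block $M_{l-1}\otimes(M_l^+)^\top$ of $B^\top$ has a single nonzero. That is exactly your statement that each expanded coordinate traces back to one original coordinate.

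Your closing remark about compositions is false and should be dropped. After the first split, $H'$ is singular, so $\pdet(B_2^\top H' B_2)=\pdet(H')\det(K_2)$ no longer holds. The composite matrix is $B_1K_2B_1^\top$, whose determinant is in general not $\det(K_1)\det(K_2)$.

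A concrete counterexample: evenly split a neuron with weights $(w,a)$ into two copies, then evenly split one of the copies. The result is a $3$-way split with proportions $(\tfrac12,\tfrac14,\tfrac14)$. Its matrix is $K=\diag(\tfrac38,3)$, so $\pdet(H'')=\tfrac98\,\pdet(H)$, exactly as \cref{thm:pdet} predicts for an uneven split.

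Your remark on weights that are both split and duplicated also needs a qualification. It requires all splits to share the same order $m$, or the count $|S|=|D|$ to hold separately for each order. Otherwise factors $m_1$ and $1/m_2$ do not cancel.
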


\begin{proof}
    For an even split, an original parameter is divided uniformly such that $\alpha_{d,i} = 1/m$ for all $i = 1, \dots, m$. We evaluate the sum of the squared coefficients for any duplicated parameter $d \in D$:
    $$
    \sum_{i=1}^m \alpha_{d, i}^2 = \sum_{i=1}^m \left(\frac{1}{m}\right)^2 = m \left(\frac{1}{m^2}\right) = \frac{1}{m}
    $$
    Substituting this result into our expression for $\det(K)$ from the general framework, we find:
    $$
    \det(K) = \prod_{d \in D} \left( m \cdot \frac{1}{m} \right) = \prod_{d \in D} (1) = 1
    $$
    Because $\det(K) = 1$, the pseudo-determinant is strictly conserved:
    $$
    \pdet(H') = \pdet(H)\det(K) = \pdet(H)
    $$
    This completes the proof.
\end{proof}

\begin{theorem}\label{thm:uneven_volume}
    Let $H'$ be obtained after acting on the weights corresponding to a Hessian $H$ with the pre-activation or post-activation splitting groupoid. For any general split where $|S|=|D|$, the pseudo-determinant of the Hessian satisfies $\pdet(H') \ge \pdet(H)$.
\end{theorem}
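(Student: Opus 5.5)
The plan is to reuse the determinant formula for $K = BB^\top$ derived just above, together with the identity $\pdet(H') = \pdet(H)\det(K)$. The latter follows from \cref{thm:precond}: the nonzero eigenvalues of $H'$ are those of $HK$, and $H$ is full rank. With this in hand, the claim $\pdet(H') \ge \pdet(H)$ reduces to showing $\det(K) \ge 1$ for every admissible split with $|S| = |D|$. Since $\pdet(H) > 0$, multiplying by $\det(K) \ge 1$ gives the result immediately.

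First, I restate the diagonal of $K$. Untouched weights contribute entries equal to $1$. Each split weight $s \in S$ contributes an entry equal to $m$, because its row of $B$ has $m$ unit entries. Each duplicated weight $d \in D$ contributes $\sum_{i=1}^m \alpha_{d,i}^2$, with $\alpha_{d,i} \ge 0$ and $\sum_i \alpha_{d,i} = 1$. Because $|S| = |D|$, I pair each split factor with a duplicated factor, as done before \cref{thm:pdet_invar}:
\[
\det(K) = \prod_{d \in D} \Bigl( m \sum_{i=1}^m \alpha_{d,i}^2 \Bigr).
\]
If different neurons are split with different orders $m$, the pairing must match each duplicated weight with a split weight of the same neuron and hence the same $m$. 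I will make this explicit by grouping the factors of $K$ per split neuron.

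The key inequality is Cauchy--Schwarz, or equivalently the QM--AM inequality, applied to $(\alpha_{d,1},\dots,\alpha_{d,m})$ and $(1,\dots,1)$:
\[
1 = \Bigl(\sum_{i=1}^m \alpha_{d,i}\Bigr)^2 \le m \sum_{i=1}^m \alpha_{d,i}^2 .
\]
Equality holds iff all $\alpha_{d,i} = 1/m$. Hence each paired factor is at least $1$, so $\det(K) \ge 1$ and $\pdet(H') \ge \pdet(H)$. Equality holds iff every split is even, which recovers \cref{thm:pdet_invar} as the equality case. For pre-activation splitting, the roles of split and duplicated weights are exchanged, but $K$ has the same two kinds of diagonal entries, so the argument is identical.

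I expect the main obstacle to be justifying the structural facts rather than the inequality. Specifically, I need that $K$ is genuinely diagonal and that the pairing $|S| = |D|$ matches factors of the same split order. I would verify diagonality from the block form of $B^\top$ in \cref{sec:gradients}: distinct original parameters map to disjoint sets of expanded coordinates, so the rows of $B$ have disjoint supports and are orthogonal. I would also check positivity of $K$, which is needed for the Sylvester/similarity step. Every row of $B$ is nonzero because $\sum_i \alpha_{d,i} = 1$ forces some $\alpha_{d,i} \ne 0$, so every diagonal entry of $K$ is positive. For compositions of several groupoid operations, the $K$ matrices multiply, since $B$ composes. Each factor has determinant at least $1$, so the inequality extends by induction.
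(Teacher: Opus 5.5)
This is essentially the paper's proof: you use the same pairing $\det(K)=\prod_{d\in D}\bigl(m\sum_i\alpha_{d,i}^2\bigr)$, apply RMS--AM (your Cauchy--Schwarz) to each factor, and get equality exactly for even splits (if split orders differ across neurons, the pairing needs $|S|=|D|$ neuron by neuron, not just globally). Your closing remark on compositions is not needed and is wrong as stated --- for a composite the relevant matrix is $B_1K_2B_1^\top$, not a product of the $K_i$, and the second step starts from a singular $H'$ (two successive even $2$-way splits each have $\det K=1$ yet compose to a $(\tfrac14,\tfrac14,\tfrac12)$ split with $\det K=\tfrac98$) --- so argue instead that a composite of expansions is itself a single split whose coefficients sum to $1$.
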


\begin{proof}
    Consider a general split where the proportions $\alpha_{d, 1}, \dots, \alpha_{d, m}$ are not necessarily uniform, but inherently satisfy $\sum_{i=1}^m \alpha_{d, i} = 1$. By the Root-Mean-Square/Arithmetic Mean (RMS-AM) inequality, the sum of the squares of these coefficients is strictly bounded below:
    $$
    \sqrt{\frac{1}{m} \sum_{i=1}^m \alpha_{d, i}^2} \ge \frac{1}{m} \sum_{i=1}^m \alpha_{d, i} = \frac{1}{m}(1)
    $$
    Squaring both sides and multiplying by $m$, we obtain:
    $$
    \sum_{i=1}^m \alpha_{d, i}^2 \ge \frac{1}{m} \implies m \sum_{i=1}^m \alpha_{d, i}^2 \ge 1
    $$
    Applying this inequality to our established expression for $\det(K)$, we find:
    $$
    \det(K) = \prod_{d \in D} \left( m \sum_{i=1}^m \alpha_{d, i}^2 \right) \ge \prod_{d \in D} (1) = 1
    $$
    Consequently, the scaling factor applied to the pseudo-determinant is always greater than or equal to $1$, yielding:
    $$
    \pdet(H') = \pdet(H)\det(K) \ge \pdet(H)
    $$
    By the exact equality conditions of the RMS-AM inequality, $\det(K) = 1$ if and only if $\alpha_{d, 1} = \dots = \alpha_{d, m}$ for all $d \in D$. Thus, the pseudo-determinant strictly increases under any uneven split, recovering equality only in the case of a perfectly even split as proven in \autoref{thm:pdet_invar}.
\end{proof}
\subsection{Proof of \cref{thm:trace} (Trace Formula)}
\label{app:trace_proof}

Direct calculation from the Hessian block structure. After an order-$m$ split, the diagonal entries transform as:
\begin{itemize}
    \item Entries in $H_{OO}$: unchanged
    \item Entries in $H_{SS}$: each entry $H_{ss}$ is replicated $m$ times, contributing $m \cdot H_{ss}$
    \item Entries in $H_{DD}$: each entry $H_{dd}$ is replicated $m$ times but scaled by $1/m^2$, contributing $(1/m) \cdot H_{dd}$
\end{itemize}

Summing: $\tr(H') = \tr(H_{OO}) + m \sum_s H_{ss} + (1/m) \sum_d H_{dd}$.

Since originally $\tr(H) = \tr(H_{OO}) + \sum_s H_{ss} + \sum_d H_{dd}$, we get the stated formula.

\paragraph{Extension to non-even splits.}
The same calculation (or directly via $\tr(H') = \tr(B^\top H B) = \tr(KH)$, using the diagonal entries of $K$ from \cref{thm:pdet}'s proof) extends to splits with arbitrary coefficients $\alpha_1, \ldots, \alpha_m$ summing to 1:
\begin{equation}
\tr(H') = \tr(H_{\setminus S,D}) + m \sum_{s \in S} H_{ss} + \left(\sum_{i=1}^m \alpha_i^2\right) \sum_{d \in D} H_{dd},
\end{equation}
recovering \cref{thm:trace} when $\alpha_i = 1/m$. Since RMS-AM gives $\sum_i \alpha_i^2 \geq 1/m$ with equality iff $\alpha_i = 1/m$, even splits also minimize the trace at fixed $m$.

\paragraph{Optimal Splitting Degree.}
Treating the splitting degree $m$ as continuous, the trace formula is convex in $m$; we call its minimizer (the value of $m$ that yields the smallest trace under the fixed split structure $S, D$) the \emph{optimal splitting degree}. Setting $\partial \tr(H') / \partial m = 0$,
\begin{equation}
m^* = \sqrt{\frac{\sum_{d \in D} H_{dd}}{\sum_{s \in S} H_{ss}}}.
\end{equation}
Substituting $m^*$ back into the trace formula (or applying AM-GM) yields the minimum trace
\begin{equation}
\min \tr(H') = \tr(H_{\setminus S,D}) + 2\sqrt{\sum_{d \in D} H_{dd} \sum_{s \in S} H_{ss}}.
\end{equation}

\section{Zero-type Neurons}
\label{app:zero_type}
The main text describes two function-preserving operations for arbitrary activations. The first (neuron duplication) is characterized in \cref{sec:general} using duplication matrices $M$ and pseudoinverses $M^+$. Here we give an analogous characterization of the second: adding \emph{0-type neurons}, whose total contribution to the network output cancels.

\paragraph{Setup.} Consider a single hidden layer with $n$ neurons, input weight matrix $W_l \in \R^{n \times d_{\mathrm{in}}}$ and output weight matrix $W_{l+1} \in \R^{d_{\mathrm{out}} \times n}$. We expand to $n + k$ neurons via expansion matrix $M_0 \in \R^{(n+k) \times n}$ and contraction matrix $M_0^+ \in \R^{n \times (n+k)}$:
\begin{equation}
M_0 = \begin{bmatrix} I_n \\ A \end{bmatrix}, \qquad M_0^+ = \begin{bmatrix} I_n & Z \end{bmatrix},
\end{equation}
where $A \in \R^{k \times n}$ and $Z \in \R^{n \times k}$. The transformation $W_l \to M_0 W_l$ and $W_{l+1} \to W_{l+1} M_0^+$ adds $k$ new neurons with input weights $A W_l$ and effective output weights $W_{l+1} Z$.

\paragraph{Function preservation.} Verifying $f_\theta(x)$ is preserved:
\begin{align}
W_{l+1} M_0^+ \sigma(M_0 W_l x) &= \begin{bmatrix} W_{l+1} & W_{l+1} Z \end{bmatrix} \begin{bmatrix} \sigma(W_l x) \\ \sigma(A W_l x) \end{bmatrix} \nonumber \\
&= W_{l+1}\, \sigma(W_l x) + W_{l+1} Z\, \sigma(A W_l x).
\end{align}
The new neurons' total contribution $W_{l+1} Z\, \sigma(A W_l x)$ must vanish for all $x$. Partition the new neurons into groups by equality of their input weights (rows of $A W_l$). For arbitrary $\sigma$, this constraint is equivalent to: for each group $G_g$, the corresponding columns of $W_{l+1} Z$ sum to the zero vector. Writing $\mathbf{1}_g \in \R^k$ for the indicator vector of group $G_g$:
\begin{equation}
W_{l+1} Z\, \mathbf{1}_g = 0 \quad \text{for every group } g.
\end{equation}

\paragraph{Two special cases.}
\begin{itemize}
    \item \textbf{Identically zero outputs:} when no two new neurons share an input weight (rows of $A W_l$ are distinct), each group has size 1 and the constraint reduces to $W_{l+1} Z = \mathbf{0}$, i.e., every column of $Z$ lies in the right null space of $W_{l+1}$.
    \item \textbf{Cancellation:} when new neurons come in pairs (or larger groups) with shared input weights, each group's columns of $W_{l+1} Z$ must sum to zero, but individual columns may be nonzero.
\end{itemize}

\paragraph{Relation to duplication.} The 0-type expansion is complementary to neuron duplication. Duplication copies existing neurons and redistributes their output weights (the new neurons are functionally redundant copies of existing ones). The 0-type expansion adds genuinely new neurons whose total output cancels. Together, these two operations generate the full set of overparameterization symmetries for arbitrary activations described in \cref{sec:general}.

\begin{tcolorbox}[title=Example: Adding 0-type Neurons, colback=yellow!5]
For a network with $n = 2$ hidden neurons and $d_{\mathrm{out}} = 2$, adding two 0-type neurons sharing their input weight (rows 3 and 4 of $M_0$ are equal):
\[
M_0 = \begin{bmatrix} 1 & 0 \\ 0 & 1 \\ a_1 & a_2 \\ a_1 & a_2 \end{bmatrix}, \qquad
M_0^+ = \begin{bmatrix} 1 & 0 & z_1 & -z_1 \\ 0 & 1 & z_2 & -z_2 \end{bmatrix}.
\]
Both new neurons share the pre-activation $(a_1 W_l[1,:] + a_2 W_l[2,:]) x$, and their output contributions $W_{l+1} z\, \sigma(\cdot)$ and $-W_{l+1} z\, \sigma(\cdot)$ cancel for any $z = (z_1, z_2)^\top$. The parameters $a_1, a_2, z_1, z_2$ are all free.
\end{tcolorbox}
% \paragraph{Effect on optimization.} Since the new neurons have zero output weights, backpropagation gives zero gradients for their input weights, and the Hessian gains additional zero eigenvalues corresponding to these fully flat directions. The existing eigenvalues are unchanged: the $B$ matrix for this transformation simply pads with new dimensions rather than rescaling existing ones. This means 0-type neurons contribute to Mechanism 2 (volume growth, by expanding the solution manifold) but not to Mechanism 1 (preconditioning).

\section{Per-Weight Splitting and the Inverse Transformation}
\label{app:per_weight}

\subsection{Per-Weight Coefficient Matrices}

The formulation in \citet{simsek2021geometry} requires scaling the entire outgoing weight vector together. However, there exists a strictly richer symmetry: the splitting coefficients need not be shared across the entire outgoing weight vector.

Let $W_{l+1} \in \R^{d \times n}$ have columns $\va_1, \va_2, \ldots, \va_n$. If column $i$ is duplicated $m_i$ times, we can choose a corresponding coefficient matrix $C_i \in \R^{d \times m_i}$ with normalized rows: $\sum_{j=1}^{m_i} (C_i)_{kj} = 1$ for all rows $k$.

The transformed output weights are:
\begin{equation}
W_{l+1} \to \begin{bmatrix}
    C_1 \odot \va_1 & C_2 \odot \va_2  \, \dots \, C_n \odot \va_n
\end{bmatrix},
\end{equation}
where $\odot$ denotes element-wise multiplication and broadcasting applies column-wise (equivalently, the $i$-th block is given by $\mathrm{diag}(\va_i)C_i$).

\begin{tcolorbox}[title=Example: Per-Weight Splitting Coefficients, colback=green!5]
For $M = \begin{bsmallmatrix} 1 & 0 \\ 1 & 0 \\ 0 & 1 \end{bsmallmatrix}$, we can transform a $2 \times 2$ matrix:
\[
W_{l+1} = \begin{bmatrix} a_1 & a_2 \\ a_3 & a_4 \end{bmatrix}
\to
\begin{bmatrix} \mu_1 a_1 & (1-\mu_1) a_1 & a_2 \\ \mu_2 a_3 & (1-\mu_2) a_3 & a_4 \end{bmatrix}
\]
for constants $\mu_1, \mu_2 \in \R$. Crucially, $\mu_1$ and $\mu_2$ can be chosen independently, giving per-weight control rather than per-neuron control.
\end{tcolorbox}

\subsection{Row-wise Hessian Scaling}
\label{app:rowwise}

The per-weight splitting above forces us to scale rows/columns in the Hessian as neuron-wise blocks. Here we discuss the gradient behavior when splits use individual weight-wise coefficients $c_{jk}$.

\begin{theorem}
Using the notation from \cref{sec:gradients}, if our splits use individual coefficients per weight $c_{jk}$, we get:
\begin{align}
\frac{\partial L}{\partial w'^{(l)}_{ij}} &= h^{(l-1)}_i (\sigma^l)' \sum_k c_{jk} w_{jk}^l \delta^{(l+1)}_k \\
&= \frac{\sum_k c_{jk} w_{jk}^l \delta^{(l+1)}_k}{\sum_k w_{jk}^l \delta^{(l+1)}_k} \frac{\partial L}{\partial w^{(l)}_{ij}} \\
&:= d_j \frac{\partial L}{\partial w^{(l)}_{ij}}.
\end{align}
\end{theorem}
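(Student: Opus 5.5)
The plan is to redo the backpropagation argument from the proof of \cref{thm:gradient}, this time tracking individual per-weight coefficients $c_{jk}$ instead of one coefficient per neuron. Fix a neuron $j$ in layer $l$ that is duplicated, and look at one of its copies $j'$. Its incoming weights are identical to the original's, $w'^{(l)}_{ij} = w^{(l)}_{ij}$. Its outgoing weight to unit $k$ in layer $l+1$ is $c_{jk} w^{l}_{jk}$, written in the statement's indexing of the outgoing weights. Since the function is preserved, every pre-activation and activation in the network is unchanged, and so is every downstream error $\delta^{(l+1)}_k$. This is exactly the fact established at the end of the proof of \cref{thm:gradient}, and I would reuse it as is.

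The first step is the standard weight-gradient formula $\partial L/\partial w'^{(l)}_{ij} = \delta'^{(l)}_{j'} h^{(l-1)}_i$. The input activation $h^{(l-1)}_i$ is unchanged, so the work lies in computing the copy's error. Backpropagating one layer gives
\[
\delta'^{(l)}_{j'} = (\sigma^l)'(a^{(l)}_{j'}) \sum_k c_{jk} w^l_{jk} \delta^{(l+1)}_k .
\]
The copy's pre-activation equals the original's, so the derivative $(\sigma^l)'$ is evaluated at the same point for both. Multiplying by $h^{(l-1)}_i$ yields the first displayed line of the statement.

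The second step is the ratio form. In the unsplit network, $\partial L/\partial w^{(l)}_{ij} = h^{(l-1)}_i (\sigma^l)' \sum_k w^l_{jk}\delta^{(l+1)}_k$. Dividing the two expressions cancels the common factor $h^{(l-1)}_i(\sigma^l)'$. What remains is the ratio $d_j$, which depends only on the neuron $j$ and not on the input index $i$. This independence from $i$ is the substantive point: the Hessian rows for copy $j'$ are still rescaled uniformly, but now by the data-dependent factor $d_j$ rather than by a fixed constant. As a sanity check, when $c_{jk}=\alpha$ for all $k$ the ratio collapses to $d_j=\alpha$, recovering \cref{thm:gradient}.

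The main obstacle is that $d_j$ is defined as a quotient. It is undefined whenever $\sum_k w^l_{jk}\delta^{(l+1)}_k = 0$, which happens in particular at an exact global minimum where all gradients vanish. There is also a second issue. For a population or batch loss, $\delta$ depends on the sample $x$, so the identity holds per sample, and $d_j$ is a sample-dependent ratio. It does not pass through the expectation unless the ratio is constant across samples. I would therefore state and prove the identity per sample, under the explicit assumption that the denominator is nonzero. For the aggregated loss, I would keep the first displayed expression, which is always valid, as the primary statement, and treat the $d_j$ form as notation defined only where the denominator is nonzero.
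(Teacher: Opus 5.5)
Your proposal is correct and matches the paper's route: the paper gives no separate proof, and its displayed chain comes directly from the backpropagation rule $\partial L/\partial w = \delta h$ in \cref{thm:gradient}'s proof, with the copy's error $(\sigma^l)'\sum_k c_{jk} w^l_{jk}\delta^{(l+1)}_k$, exactly as you derive it. Your two caveats are genuine refinements the paper leaves implicit: the identity holds only per sample (after averaging over $x$ the factor $h^{(l-1)}_i$ sits inside the expectation and no longer cancels, so the aggregated ratio can depend on $i$, contrary to the paper's remark that $d_j$ depends only on $j$), and $d_j$ is $0/0$ at the exact zero-loss minima on which the paper's Hessian analysis is built.
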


The coefficient $d_j$ only depends on index $j$, so we cannot control the gradient of individual weights independently.
% However, composing post-activation splitting and rescale operations may enable finer control.

\subsection{Inverse Transformation}
\label{app:inverse}

Given an expanded network with duplication matrix $M \in \R^{r \times n}$ and contraction $M^+$, the inverse transformation recovers the original width-$n$ parameterization. For post-activation splitting, where neuron $i$ was duplicated $m_i$ times, the inverse groups each set of $m_i$ copies back into a single neuron: the input weights are averaged across copies (recovering the original, since all copies are identical by construction), and the output weights are summed.

Concretely, the inverse uses the even-split pseudoinverse $M^+_{\text{even}} \in \R^{n \times r}$ for the input side and the original expansion matrix $M \in \R^{r \times n}$ for the output side:
\begin{equation}
W_l' \to M^+_{\text{even}}\, W_l', \quad W_{l+1}' \to W_{l+1}'\, M,
\end{equation}
where $(M^+_{\text{even}})_{i,j} = 1/m_i$ if column $j$ is a copy of neuron $i$ and $0$ otherwise. The input averaging recovers the shared weights since $M^+_{\text{even}} M = I_n$. Right-multiplying by $M$ naturally sums the split output weights: because the expanded weights are $W_{l+1}' = W_{l+1} M^+$, the inverse operation yields $W_{l+1}' M = W_{l+1} M^+ M = W_{l+1} I_n = W_{l+1}$.

For pre-activation splitting (ReLU), the roles reverse: output weights are averaged and input weights are summed to undo the splitting coefficients $D_{m_i}$.

Note that the inverse is only well-defined on the image of the forward transformation i.e., on networks whose weights exhibit the tied structure produced by expansion. This is why the symmetries form a groupoid rather than a group (\cref{sec:groupoid}).

\begin{tcolorbox}[title=Example: Inverse of Post-Activation Splitting, colback=red!5]
For $M = \begin{bsmallmatrix} 1 & 0 \\ 1 & 0 \\ 0 & 1 \end{bsmallmatrix}$ (duplicating neuron 1), suppose the expanded network has:
\[
W_l' = \begin{bmatrix} \vw_1 \\ \vw_1 \\ \vw_2 \end{bmatrix}, \quad
W_{l+1}' = \begin{bmatrix} \mu_1 a_1 & (1-\mu_1) a_1 & a_2 \\ \mu_2 a_3 & (1-\mu_2) a_3 & a_4 \end{bmatrix}.
\]
The inverse uses $M^+_{\text{even}} = \begin{bsmallmatrix} 1/2 & 1/2 & 0 \\ 0 & 0 & 1 \end{bsmallmatrix}$ and the original matrix $M$:
\[
M^+_{\text{even}}\, W_l' = \begin{bmatrix} \vw_1 \\ \vw_2 \end{bmatrix},
\]
\[
W_{l+1}'\, M = \begin{bmatrix} \mu_1 a_1 + (1-\mu_1) a_1 & a_2 \\ \mu_2 a_3 + (1-\mu_2) a_3 & a_4 \end{bmatrix} = \begin{bmatrix} a_1 & a_2 \\ a_3 & a_4 \end{bmatrix},
\]
recovering the original parameterization.
\end{tcolorbox}

\section{Transformation of Weight Space, Gradients and Hessian under Splitting}
\label{app:vectorize}

For an $L$-layer neural network, assume we insert a transformation pair $M_l^+ M_l = I$ between every adjacent layer $l$ and $l+1$. The forward parameters transform as follows:
\begin{itemize}
    \item First layer: $W_1 \to M_1 W_1$
    \item Middle layers ($1 < l < L$): $W_l \to M_l W_l M_{l-1}^+$
    \item Final layer: $W_L \to W_L M_{L-1}^+$
\end{itemize}

By vectorizing the weights using the identity $\mathrm{vec}(ABC) = (C^T \otimes A)\mathrm{vec}(B)$, the parameter transformation matrix $\mathbf{P}$ across all layers is the block-diagonal matrix:
\begin{equation}
    \mathbf{P} = \mathrm{diag}\left(I \otimes M_1, \dots, (M_{l-1}^+)^T \otimes M_l, \dots, (M_{L-1}^+)^T \otimes I \right)
\end{equation}

Because parameters are contravariant while gradients are covariant, the gradient vector $\nabla L_\theta$ transforms via the transpose of the pseudo-inverse of the parameter transformation: $\nabla L_{\theta'} = (\mathbf{P}^+)^T \nabla L_\theta$.

Letting $B^T = (\mathbf{P}^+)^T$, the correct transformation for the expanded gradient vector across the entire network is $\nabla L_{\theta'} = B^T \nabla L_\theta$, where $B^T$ is the block-diagonal matrix:
\begin{equation}
    B^T = \mathrm{diag}\left(I \otimes (M_1^+)^T, \dots, M_{l-1} \otimes (M_l^+)^T, \dots, M_{L-1} \otimes I \right)
\end{equation}

By the multivariate chain rule, the Jacobian of this transformation is simply the constant matrix $B$:
\begin{equation}
    J = \frac{\partial \theta}{\partial \theta'} = B
\end{equation}

The first derivative (the gradient) transforms via the transposed Jacobian:
\begin{equation}
    \nabla_{\theta'} L = J^T \nabla_\theta L = B^T \nabla_\theta L
\end{equation}

To find the transformed Hessian $H_{\theta'} = \nabla^2_{\theta'} L$, we differentiate the transformed gradient with respect to the new parameters $\theta'$:
\begin{equation}
    H_{\theta'} = \frac{\partial}{\partial \theta'} \left( B^T \nabla_\theta L \right)
\end{equation}

Because $B$ is a matrix of constants, its derivative with respect to $\theta'$ is zero. Thus, the product rule yields no higher-order derivative tensor terms. Applying the chain rule, we extract the original Hessian $H_\theta = \frac{\partial}{\partial \theta}(\nabla_\theta L)$ and multiply by the Jacobian $J = B$:
\begin{align}
    H_{\theta'} &= B^T \left( \frac{\partial (\nabla_\theta L)}{\partial \theta} \right) \frac{\partial \theta}{\partial \theta'} \nonumber \\
                &= B^T H_\theta B
\end{align}

The gradient identity $\nabla_{\theta'} L = B^\top \nabla_\theta L$ holds on the symmetry orbit. The corresponding Hessian relation picks up an extra term proportional to $\nabla L$ off critical points, so $H_{\theta'} = B^\top H_\theta B$ holds exactly at the zero-loss minima we study. More generally, the relation holds exactly whenever the Gauss-Newton approximation to the Hessian is exact.

In overparameterized networks, almost all local minima are globally optimal \citep{nguyen2017loss}. The residual $e \nabla^2 f$ vanishes at these zero-loss minima, so $H_{\theta'} = B^\top H_\theta B$ holds exactly throughout the minimum landscape.

\paragraph{Block form for even splits.}
Partitioning the weights into $O$ (other), $S$ (split), and $D$ (duplicated) with $|S|=|D|$, an order-$m$ even split transforms the Hessian as
\begin{equation}
H = \begin{bmatrix}
H_{OO} & H_{OS} & H_{OD} \\
H_{OS}^\top & H_{SS} & H_{SD} \\
H_{OD}^\top & H_{SD}^\top & H_{DD}
\end{bmatrix}
\;\to\;
H' = \begin{bmatrix}
H_{OO} & H'_{OS} & \tfrac{1}{m} H'_{OD} \\
H_{OS}'^\top & H'_{SS} & \tfrac{1}{m} H'_{SD} \\
\tfrac{1}{m} H_{OD}'^\top & \tfrac{1}{m} H_{SD}'^\top & \tfrac{1}{m^2} H'_{DD}
\end{bmatrix},
\end{equation}
where $H'_{XX} = H_{XX} \otimes \mathbf{1}_{m \times m}$. Every duplicated row/column appears $m$ times in its block; the $1/m$ and $1/m^2$ factors come from the $\alpha_i = 1/m$ row scalings of $B$ for duplicated weights.

\section{The Overparameterization Groupoid}
\label{app:groupoid}

Let $\Gamma^{(L,N)}$ be the set of functions exactly representable by a neural network with $L$ hidden layers and $N$ neurons per layer. Notice that $\Gamma^{(L,N)} \subset \Gamma^{(L+p,N+q)}$ for any $p,q \in \mathbb{N}$. Let $\Theta^{L}_N$ be the corresponding parameter space of weights for such a network. For brevity, we denote the single-layer parameter space as $\Theta_N = \Theta^{1}_N$, and the union of parameter spaces with width at least $a$ as $\Theta^{L}_{\ge a} = \bigcup_{N=a}^\infty \Theta^{L}_N$.

To formalize overparameterization (i.e., post-activation splitting) that preserves the network's function, we define operations mapping between these parameter spaces. For $b \ge a$, let $T_{a \rightarrow b}$ be the set of single-layer expansion operations:
\begin{equation}
T_{a \rightarrow b} = \{(M_{a\rightarrow b}, C_1, C_2 \dots C_a) \mid M_{a\rightarrow b} \text{ and } C_i \text{ are valid expansions from } a \text{ to } b\},
\end{equation}
where $M_{a\rightarrow b}$ and $C_i$ are the expansion and coefficient matrices from \cref{sec:characterizing}. Any transformation $t \in T_{a \rightarrow b}$ maps $\Theta_a \rightarrow \Theta_b$.

Conversely, we define a set of inverse operations, $T_{b \rightarrow a}$, which are strictly defined on the image of the expansions, $\im(T_{a \rightarrow b}) \subset \Theta_b$. Specifically, any $t \in T_{b \rightarrow a}$ takes $b-a$ duplicated pre-activation weights and collapses them into a single weight, while summing the corresponding output weights. These duplicated weights are guaranteed to exist for any $\theta \in \im(T_{a \rightarrow b})$ (see Appendix~\ref{app:inverse}).

For an entire $L$-layer network, the transformations are applied layer-wise. We denote the set of full-network expansions as the Cartesian product $T^L_{a\rightarrow b} = (T_{a\rightarrow b})^L$, with inverses defined similarly.

We now construct an algebraic structure to capture all valid sequences of these transformations. We must use a \textbf{groupoid} rather than a group for two reasons:
\begin{enumerate}
    \item \textbf{Partial composition:} A transformation mapping $\tau_{a\rightarrow b}$ can only be composed with a subsequent transformation $\tau_{b\rightarrow c}$. The binary operation is only defined when input and output widths match.
    \item \textbf{Conditional invertibility:} We can only invert an expansion (apply a contraction) if the weights are perfectly tied as a result of a previous expansion.
\end{enumerate}

Let $n_f$ be the minimum width such that the network's underlying function $f_\theta \in \Gamma^{(L,n_f)}$. We define the overparameterization groupoid $\Onl$ over the parameter space $\Theta^L_{\ge n_f}$. $\Onl$ is formally generated by the collection of all valid layer-wise expansions, their inverses, and the identity operations $I_N$:
\begin{equation}
\Onl = \langle T^L_{a \rightarrow b} \cup T^L_{b \rightarrow a} \cup \{I_N\} \mid \forall a, b, N \ge n_f \rangle.
\end{equation}
The partial binary operation $*$ is standard function composition, which is inherently associative. Because $\Onl$ contains all identities, inverses on valid images, and associative partial composition, it satisfies the groupoid axioms. A key property of $\Onl$ is that it describes the exact orbit of parameters that leave the underlying function invariant; for any $\theta \in \Theta^L_{\ge n_f}$ and valid $g \in \Onl$, $f_\theta = f_{g \cdot \theta}$.

We remark that even some weight-space symmetries at a fixed width can be viewed as ``local'' or groupoid symmetries; \citet{grigsby2023hidden} provide a detailed analysis of such hidden symmetries in ReLU networks.

\section{Optimization Intuition via Rescale Symmetry}
\label{sec:rescale}

Before analyzing the overparameterization symmetries from the main text, we build intuition with the simpler rescale symmetry, which is present even in exactly-parameterized networks. The core argument carries over to the overparameterization case. It is well documented that rescale symmetries impact the curvature of minima and can be exploited to improve optimization \citep{huang2017projection, salimans2016weight, neyshabur2015path, mishkin2025levelset}.

Consider a two-layer ReLU network $f(x; W, U) = U \cdot \text{ReLU}(Wx)$. Since ReLU is equivariant to positive rescaling, the transformation $g_\lambda(W, U) = (\lambda W, \lambda^{-1} U)$ for $\lambda \in \R^+$ preserves the function. More generally, each hidden neuron can be rescaled independently.

\paragraph{Gradients under rescaling.} At a rescaled point $(\lambda W, \lambda^{-1} U)$, the gradients transform as:
\begin{equation}
\frac{\partial L}{\partial W'} = \lambda^{-1} \frac{\partial L}{\partial W}, \qquad \frac{\partial L}{\partial U'} = \lambda \frac{\partial L}{\partial U}.
\end{equation}
This follows from the chain rule: $W' = \lambda W$ so $\partial L / \partial W'_{ij} = (\partial W_{ij} / \partial W'_{ij}) \cdot \partial L / \partial W_{ij} = \lambda^{-1} \partial L / \partial W_{ij}$, and similarly for $U$. Rescaling thus redistributes gradient magnitudes between the input and output weights. When $\lambda > 1$, input weight gradients shrink while output weight gradients grow, and vice versa. This is analogous to the gradient scaling under splitting (\cref{thm:gradient}), where the splitting coefficients play the role of $\lambda$.

\paragraph{Hessian under rescaling.} If $(W^*, U^*)$ is a global minimum, rescaling gives a class of equivalent minima $[(W^*, U^*)]_\Tr$. The minimum $(W^*, U^*)$ can be arbitrarily poorly conditioned, but the symmetry orbit contains solutions with different conditioning. The Hessian at $(W^*, U^*)$ has the block structure:
\begin{equation}
H = \begin{bmatrix} H_{WW} & H_{WU} \\ H_{WU}^\top & H_{UU} \end{bmatrix}.
\end{equation}

At the rescaled point $(\lambda W^*, \lambda^{-1} U^*)$, the Hessian becomes:
\begin{equation}
H(\lambda) = \begin{bmatrix} \lambda^{-2} H_{WW} & H_{WU} \\ H_{WU}^\top & \lambda^{2} H_{UU} \end{bmatrix}.
\end{equation}

This is diagonal preconditioning with $K = \diag(\lambda^{-2} I, \lambda^2 I)$, matching the structure from \cref{sec:preconditioning}. Choosing $\lambda$ lets us trade off curvature between the $W$ and $U$ blocks.

\paragraph{Limitations.} While $\min_\lambda \kappa(H(\lambda)) \leq \kappa(H)$ always holds, rescaling cannot improve conditioning universally. To see this, consider the Rayleigh quotient for $x = (u, v)$, which is a weighted average of eigenvalues:

\begin{equation}
R(H(\lambda), x) = \frac{\lambda^{-2} u^\top H_{WW} u + 2 u^\top H_{WU} v + \lambda^{2} v^\top H_{UU} v}{\|u\|^2 + \|v\|^2}.
\end{equation}
The off-diagonal term $u^\top H_{WU} v$ is unaffected by $\lambda$, so if the extreme eigenvalues are dominated by the cross-block coupling, rescaling has little effect. Similarly, if both extreme eigenvalues come from a single block (say $H_{WW}$), rescaling scales them together and cannot change their ratio. Conditioning improves when the largest and smallest eigenvalues are split across the $H_{WW}$ and $H_{UU}$ blocks, which corresponds to the eigenvectors being approximately axis-aligned in the $(W, U)$ decomposition. This is depicted in \cref{fig:rescale}.

\section{Which Activations Allow Pre-Activation Splitting?}
\label{app:activations}

Understanding general pre-activation splitting requires solving the functional equation representing a single split:
\begin{equation}
\forall x \in \mathbb{R}, \quad \sigma(\alpha x) + \sigma(\beta x) = \sigma(x),
\end{equation}
for some expansion constants $\alpha, \beta$. A direct observation is that setting $x=0$ yields $2\sigma(0) = \sigma(0)$, implying $\sigma(0) = 0$. This immediately rules out activations with non-zero origins, such as the standard sigmoid.

To solve this systematically, we first restrict our attention to the domain $x > 0$ and assume $\alpha, \beta > 0$. We can map this to a continuous difference equation by substituting $x = e^t$ and defining $y(t) = \sigma(e^t)$. Letting $\alpha = e^a$ and $\beta = e^b$, the equation becomes:
\begin{equation}
y(t+a) + y(t+b) = y(t).
\end{equation}

Applying the Laplace transform (assuming mild growth conditions on $y$) yields $(e^{as} + e^{bs} - 1)Y(s) = 0$. For non-trivial solutions ($Y(s) \neq 0$), $s$ must be a root of the characteristic equation, meaning $s$ lies in the set:
\begin{equation}
S_{a,b} = \{s \in \mathbb{C} \mid e^{as} + e^{bs} = 1\}.
\end{equation}

If the ratio $a/b$ is rational (i.e., $a/b = p/q$ with $\gcd(p,q)=1$), the substitution $u = e^{s/q}$ reduces this to a polynomial equation $u^p + u^q = 1$, which has exactly $\max(p,q)$ roots for $u$ by the Fundamental Theorem of Algebra. Taking the inverse Laplace transform for a given principal root yields base solutions of the form $y(t) = C e^{st}$, which translates back to $\sigma(x) = C x^s$, subject to the algebraic constraint $\alpha^s + \beta^s = 1$. When $s$ is strictly real, these correspond to standard monomial functions.

However, we must also consider complex roots. When $s = \lambda + iw$, the solutions exhibit oscillatory behavior. For a purely imaginary root ($s = iw$, so $\lambda=0$), the real-valued solutions take the form $\sigma(x) = \cos(w \ln x)$. Returning to the difference equation, this requires $\cos(wt + aw) + \cos(wt + bw) = \cos(wt)$. By angle addition, this holds when the phases form an equilateral triangle in the complex plane, yielding $(aw, bw) = (\pm \frac{\pi}{3} + 2\pi m, \mp \frac{\pi}{3} + 2\pi n)$ for $m,n \in \mathbb{Z}$. Because the limit $\lim_{x \to 0^+} \cos(w \ln x)$ is undefined, these purely oscillatory solutions are pathological and impractical for neural networks.

A careful treatment of signs maps our real roots ($s=1, 2, \dots$) directly to practical piecewise-polynomial activations. This framework naturally recovers both the standard ReLU ($s=1$) and ReLU$^2$ ($s=2$), which has recently emerged as a performant choice in sparse LLMs \cite{zhang2024relu2winsdiscoveringefficient}.

\section{Axis-Algined Eigenvectors and Transformer Rotation Symmetries}
\label{app:transformers}

We empirically check the axis alignment of the Hessian eigenvectors of pretrained
teacher networks across architectures.

In \ref{fig:axis_alignment}, we compute the top-1000 Hessian eigenpairs via
Lanczos at the trained parameters, and for each eigenvector $v$ measure the
participation ratio $\mathrm{PR}(v) = 1 / \sum_i v_i^4$ on the parameter basis. The
fourth power simply makes $\mathrm{PR}$ an interpretable count: if $v$ has equal mass
on $k$ coordinates and zero on the rest, then $\sum_i v_i^4 = 1/k$, so
$\mathrm{PR}(v) = k$. A lower power would not give this property: $\sum_i v_i^2 = 1$
for any unit $v$, so it cannot distinguish a one-hot eigenvector from a fully spread
one. We report $\mathrm{PR}(v) / n_{\mathrm{params}}$, the fraction of parameters the
eigenvector effectively occupies; a uniformly random unit vector in
$\mathbb{R}^{n_{\mathrm{params}}}$ has expected fraction $\approx 1/3$.

Across the three architectures we tested (a CNN trained on CIFAR-10, an
absolute-position character Transformer pretrained on Alice in Wonderland, and the same
Transformer with RoPE positional encoding) the Hessian eigenvectors lie below this
random baseline (Figure~\ref{fig:axis_alignment}), indicating a degree of axis
alignment relative to random directions. The localization is strongest at the largest
eigenvalues and decays towards the random baseline as one moves down the spectrum.

\begin{figure}[h]
\centering
\includegraphics[width=\linewidth]{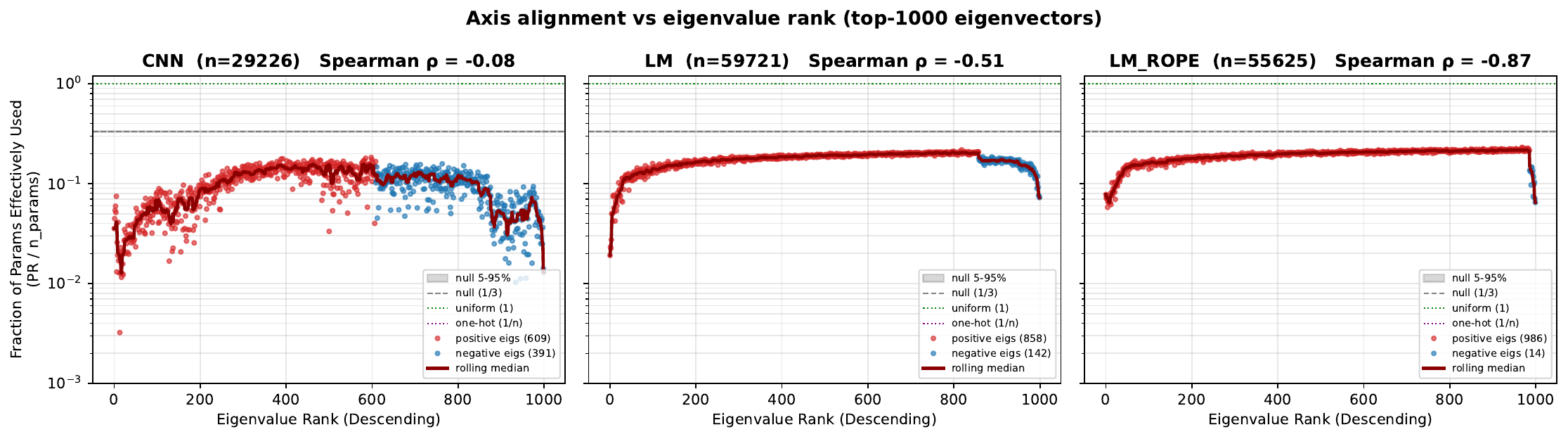}
\caption{Fraction of parameters effectively used by each of the top-1000 Hessian
eigenvectors of three pretrained teachers, against a random-unit-vector null
($\approx 1/3$, gray band). Red points correspond to positive eigenvalues, blue to
negative; the dark red line is a rolling median.}
\label{fig:axis_alignment}
\end{figure}

To see where this localization sits in parameter space, we decompose each top
eigenvector $v$ into per-tensor energy fractions
$\sum_{i \in T} v_i^2$ for each parameter tensor $T$, and divide by the
parameter-count fraction $|T| / n_{\mathrm{params}}$ that a uniform random direction
would assign to $T$. The resulting quantity (one per (eigenvector, tensor) pair) takes
value 1 for a uniform direction and grows when the eigenvector concentrates more
energy on a tensor than its size alone would predict. Figure~\ref{fig:layer_decomp}
shows this decomposition for the top-20 eigenvectors of the RoPE Transformer teacher.
The brightest columns correspond to the per-block additive biases on the residual
stream (the attention output biases \texttt{attn.out.b} and the FFN output biases
\texttt{ff.2.b} of every layer), each of which contains only 32 of the
$\sim$56k model parameters but absorbs a disproportionate share of each top
eigenvector's energy.

\begin{figure}[h]
\centering
\includegraphics[width=\linewidth]{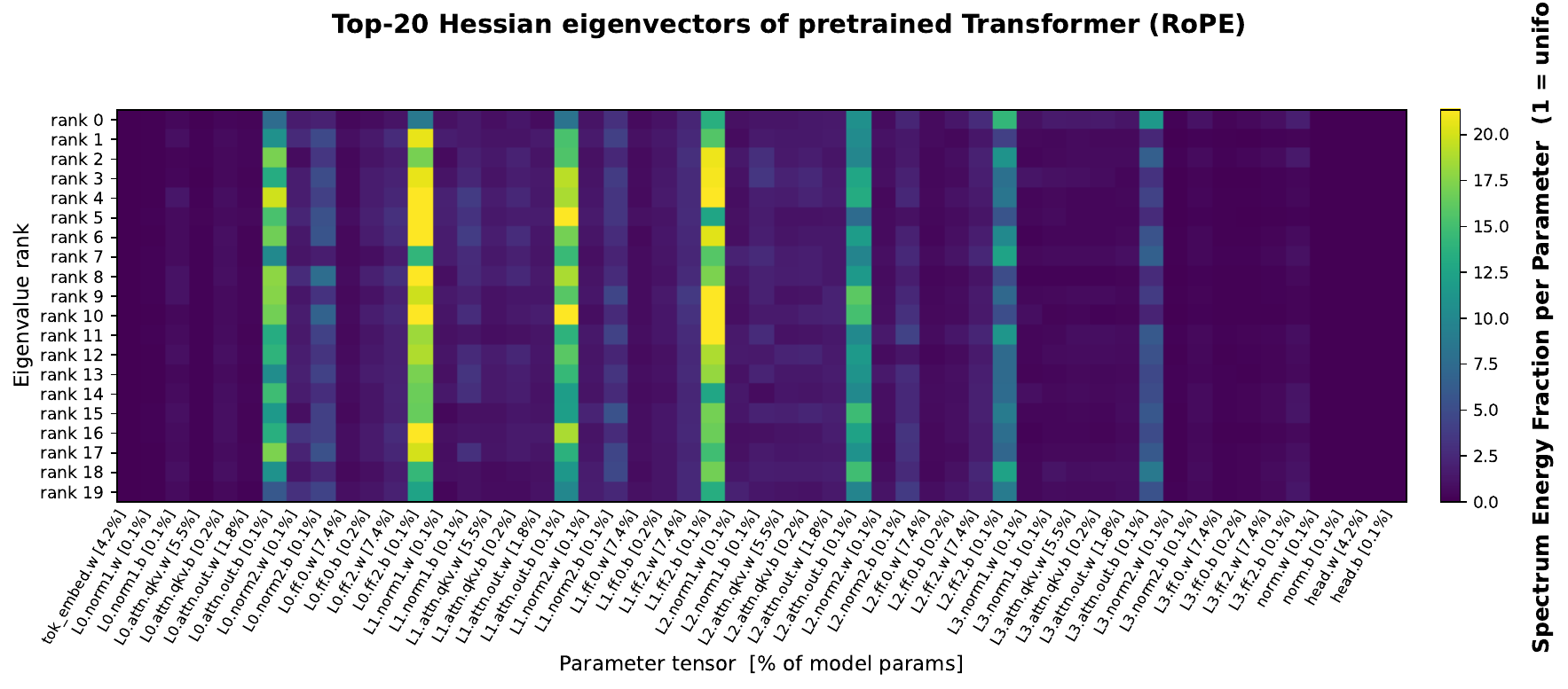}
\caption{Spectrum energy fraction per parameter for the top-20 Hessian eigenvectors
of the RoPE Transformer teacher, broken down by parameter tensor. Each column is a
parameter tensor (annotated with its share of total parameters); each row is an
eigenvector ranked by descending eigenvalue. Brighter cells indicate more
eigenvector energy per parameter than a uniform random direction would assign.}
\label{fig:layer_decomp}
\end{figure}

Transformers have an additional rotation symmetry in their MLP weights, partially described in \citet{zhang2025permutation} for networks without skip connections. In modern pre-norm Transformers with skip connections, the symmetry is more restricted. In general, the symmetry performs a rotation in the MLP weight matrices $W \to R_{\text{left}} W R_{\text{right}}^\top$ for some special orthogonal matrices $R_{\text{left}}, R_{\text{right}} \in O(n)$, acting jointly across adjacent blocks to preserve the residual stream.

To see how this rotates weight space, we apply the Kronecker product identity:
\begin{equation}
\text{vec}(R_{\text{left}} W R_{\text{right}}^\top) = (R_{\text{left}} \otimes R_{\text{right}}) \text{vec}(W).
\end{equation}

Intuitively, this rotates in unison the column weights by $R_{\text{left}}$ and the row weights by $R_{\text{right}}$.

This rotation symmetry can aid optimization algorithms with built-in diagonal preconditioners, such as Adam. Recent work has noted the underperformance of Adam when rotating the preconditioner in Transformers but not ResNets (which lack this rotation symmetry), suggesting the minima Adam converges to typically have axis-aligned eigenvectors \citep{zhang2025understanding}.

\section{Notes on Plasticity Loss}
\label{app:plasticity}

A direct application of our framework concerns plasticity loss in continual learning. If optimization converges to a simple generalizing function, the solution lies on a low functional width column of the overparameterization groupoid (\cref{fig:groupoid}). Such solutions are typically realized through dormant capacity such as duplicated or dead neurons, configurations that are fixed points of subgroups of the network's symmetry group. It is a well-known result that gradient descent preserves these fixed-point subspaces under any loss: cloned neurons stay cloned, dead neurons stay dead, throughout training. Once the objective changes, the optimizer cannot recruit this dormant capacity to fit the new task.

\citet{ziyin2023symmetry} previously analyzed this trapping mechanism for permutation, rescaling, and rotation symmetries under weight decay or noise, and explicitly linked it to plasticity loss. A more detailed and concurrent dynamical systems treatment appears in \citet{joudaki2025barriers}.

\section{Local Volume of Minima}
\label{app:ellipsoid_volume}

We compute the volume of the local $\epsilon$-sublevel set used in \cref{sec:volume_properties}. Taylor-expanding the loss near a minimum $\theta^*$, the local ellipsoidal region $\mathcal{E}_\epsilon(\theta^*)$ where the loss increases by at most $\epsilon$ satisfies $\tfrac{1}{2}(\theta - \theta^*)^\top H (\theta - \theta^*) \leq \epsilon$.

Assuming the parameter space has dimension $n$, the affine change of variables $u = \frac{1}{\sqrt{2\epsilon}} H^{1/2} (\theta - \theta^*)$ maps the region to the unit $n$-ball $u^\top u \leq 1$, with Jacobian determinant $(2\epsilon)^{n/2} (\det(H))^{-1/2}$. Integrating over the unit ball gives
\begin{equation}
  \Vol(\mathcal{E}_\epsilon(\theta^*)) = \frac{(2\epsilon)^{n/2}\,\Vol(B_{n})}{\sqrt{\det(H)}},
  \label{eq:ellipsoid_general}
\end{equation}
where $\Vol(B_{n}) = \pi^{n/2}/\Gamma(n/2 + 1)$. When $H$ is singular (as for $H'$ obtained by splitting), $\det(H)$ is replaced by the pseudo-determinant $\pdet(H)$, giving the volume modulo flat symmetry directions used in \cref{thm:pdet}.

\section{Derivation of the Volume Ratio}
\label{app:volume_ratio}

We derive the probability ratio when expanding from $n$ to $n+1$ neurons (i.e.\ $k=1$). Throughout we fix $d_{\mathrm{in}} = d_{\mathrm{out}} = 1$, so the base network has $2n$ parameters and the expanded network has $2(n+1)$ parameters. Both networks use the same constant $c > 0$, with the $m$-neuron network initialized uniformly on $[-b_m, b_m]^{2m}$ where $b_m = 1/\sqrt{m}$.

\paragraph{Domain assumption.}
We restrict attention to base minima $\theta^*$ whose weights all satisfy $|w_j^{\mathrm{in}}|, |w_j^{\mathrm{out}}| \leq b_{n+1}$ for every $j$. Under this condition, for any $\alpha \in [0,1]$ the split minimum $\theta^*_{\alpha,j}$ lies entirely inside $[-b_{n+1}, b_{n+1}]^{2(n+1)}$: the shared incoming weights are bounded by $b_{n+1}$ by assumption, and $|\alpha w_j^{\mathrm{out}}|, |(1-\alpha)w_j^{\mathrm{out}}| \leq |w_j^{\mathrm{out}}| \leq b_{n+1}$. The entire interval $\alpha \in [0,1]$ is therefore admissible.

\subsection{Base network: $P_n$}

\paragraph{Volume of the $\epsilon$-sublevel ellipsoid.}
Fix a global minimum $\theta^{*}$. The quadratic approximation of the loss defines an ellipsoid $\{\theta : \frac{1}{2}(\theta - \theta^*)^\top H(\theta-\theta^*) \leq \epsilon\}$ in $\R^{2n}$. Assuming $H$ is full rank (no flat directions for the base network), the ellipsoid has volume
\begin{equation}
  \Vol(\mathcal{E}_\epsilon(\theta^*))
  \;=\;
  \frac{(2\epsilon)^{n}\,\Vol(B_{2n})}{\sqrt{\det(H)}},
  \label{eq:ellipsoid_base}
\end{equation}
where $\Vol(B_{2n}) = \pi^{n}/\Gamma(n+1)$.

\paragraph{Summing over permutation copies.}
Permuting the $n$ neurons gives $n!$ equivalent minima. Assuming the $\epsilon$-sublevel sets around distinct permutation copies are disjoint (which holds when $\epsilon$ is small relative to the separation between minima), and that each ellipsoid lies inside $[-b_n, b_n]^{2n}$ (which holds when $\epsilon$ is small relative to the distance from $\theta^*$ to the boundary), the probability of initializing within loss $\epsilon$ of any global minimum of this type is
\begin{equation}
  \boxed{P_n
  \;=\;
  \frac{n!\;(2\epsilon)^{n}\,\Vol(B_{2n})}
       {(2b_n)^{2n}\,\sqrt{\det(H)}}.}
  \label{eq:Pn}
\end{equation}

\subsection{Expanded network: $P_{n+1}$}

\paragraph{Structure of minima.}
For the $k=1$ expansion, a global minimum of the expanded network arises by splitting exactly one neuron $j \in \{1,\ldots,n\}$ into two copies with outgoing weights $\alpha w_j^{\mathrm{out}}$ and $(1-\alpha) w_j^{\mathrm{out}}$ for some $\alpha \in [0,1]$, while both copies share the incoming weight $w_j^{\mathrm{in}}$. The corresponding parameter vector is
\[
  \theta^*_{\alpha,j}
  \;=\;
  \bigl(\ldots,\;w_j^{\mathrm{in}},\,\alpha w_j^{\mathrm{out}},\;
               w_j^{\mathrm{in}},\,(1-\alpha) w_j^{\mathrm{out}},\;\ldots
  \bigr)
  \;\in\;\R^{2(n+1)}.
\]

\paragraph{Pseudo-determinant of the expanded Hessian.}
By \cref{thm:pdet} applied to the single split, the expanded Hessian $H'$ has $2n$ nonzero eigenvalues and $2$ zero eigenvalues. Its pseudo-determinant satisfies
\begin{equation}
  \pdet(H')
  \;=\;
  \det(H)\;\cdot\;2\bigl(\alpha^2 + (1-\alpha)^2\bigr).
  \label{eq:pdet_k1}
\end{equation}

\paragraph{Flat directions.}
The $2$ zero eigenvalues correspond to two distinct directions.

\begin{itemize}
  \item \emph{Globally flat direction}: shifting $\alpha \mapsto \alpha + t$ (redistributing the outgoing weight) leaves the network function unchanged. We integrate over this direction explicitly below.

  \item \emph{Locally flat direction}: the null direction of the Hessian block for the two copies' incoming weights is $v_\alpha = (1-\alpha, -\alpha)$, since linearizing the network output in incoming-weight perturbations $(\delta w_\mathrm{in}^{(1)}, \delta w_\mathrm{in}^{(2)})$ gives a change proportional to $\alpha\,\delta w_\mathrm{in}^{(1)} + (1-\alpha)\,\delta w_\mathrm{in}^{(2)}$. Consider a perturbation size $s$ along $v_\alpha$ giving a set of parameters $\theta_s$. Along this direction the first order change in $f_{\theta_s}$ is $0$, and since the loss is quadratic in $f_{\theta_s}$ it grows at most quartically (as $s^4$), so the sublevel set $\{L \leq \epsilon\}$ has width $O(\epsilon^{1/4})$, contributing a factor $(\epsilon/C)^{1/4}$ for a constant $C > 0$ taken as a uniform upper bound on the fourth-order coefficient over $\alpha \in [0, 1]$ and admissible teacher minima.
\end{itemize}

\paragraph{Volume in the nonflat subspace.}
In the $2n$-dimensional subspace orthogonal to both flat directions:
\begin{equation}
  \Vol\!\bigl(\mathcal{E}_\epsilon(\theta^*_{\alpha,j})^{\perp}\bigr)
  \;=\;
  \frac{(2\epsilon)^{n}\,\Vol(B_{2n})}{\sqrt{\pdet(H')}}
  \;=\;
  \frac{(2\epsilon)^{n}\,\Vol(B_{2n})}
       {\sqrt{\det(H)}\cdot\sqrt{2(\alpha^2+(1-\alpha)^2)}}.
  \label{eq:ellipsoid_expanded}
\end{equation}

\paragraph{Integrating over the globally flat direction.}
The orbit is the line segment $\theta^*_\alpha = (\ldots, w_j^\mathrm{in}, \alpha w_j^\mathrm{out}, w_j^\mathrm{in}, (1-\alpha) w_j^\mathrm{out}, \ldots)$ with tangent vector $\partial_\alpha \theta^*_\alpha$ of norm $\sqrt{2}\,|w_j^\mathrm{out}|$, so the arc-length element is $d\ell = \sqrt{2}\,|w_j^\mathrm{out}|\, d\alpha$. By the domain assumption, the full interval $\alpha \in [0,1]$ is admissible. Integrating the per-$\alpha$ cross-section with respect to arc length,
\begin{equation}
  I(1)
  \;:=\;
  \int_0^1
  \frac{\sqrt{2}\,|w_j^\mathrm{out}|\, d\alpha}{\sqrt{2\bigl(\alpha^2 + (1-\alpha)^2\bigr)}}
  \;=\;
  \sqrt{2}\,|w_j^\mathrm{out}|\,\ln(1+\sqrt{2}),
  \label{eq:I1_def}
\end{equation}
where the closed form follows by the substitution $u = \alpha - \tfrac{1}{2}$ and the standard arsinh integral. The two permutation copies $(\alpha, 1-\alpha)$ and $(1-\alpha, \alpha)$ are already counted in the global $(n+1)!$ permutation factor, so we divide by $2! = 2$.

\paragraph{Summing over choices of split neuron.}
There are $n$ choices of which neuron $j$ to split, each contributing $I(1) = \sqrt{2}\,|w_j^\mathrm{out}|\,\ln(1+\sqrt{2})$ to the orbit integral. Summing over $j$ gives a total of $\sqrt{2}\,\ln(1+\sqrt{2}) \sum_{j=1}^n |w_j^\mathrm{out}| = \sqrt{2}\,\ln(1+\sqrt{2})\,\|w^\mathrm{out}\|_1$. Collecting all factors:
\begin{align}
  P_{n+1}
  \geq
  &\underbrace{(2b_{n+1})^{-2(n+1)}}_{\text{init density}}
  \;\cdot\;
  \underbrace{(n+1)!}_{\text{perm.\ copies}}
  \;\cdot\;
  \underbrace{(\epsilon/C)^{1/4}}_{\substack{\text{locally flat}\\\text{direction}}}
  \;\cdot\;
  \underbrace{\frac{(2\epsilon)^{n}\,\Vol(B_{2n})}{\sqrt{\det(H)}}}_{\text{ellipsoid vol.}}
  \;\cdot\;
  \underbrace{\frac{\sqrt{2}\,\ln(1+\sqrt{2})\,\|w^\mathrm{out}\|_1}{2}}_{\substack{\text{split neuron}\\\text{choices}}}.
  \label{eq:Pnk1}
\end{align}

\subsection{The ratio $P_{n+1}/P_n$}

Dividing~\eqref{eq:Pnk1} by~\eqref{eq:Pn}, and substituting Xavier initialization $b_m = 1/\sqrt{m}$:
\begin{align}
  \frac{P_{n+1}}{P_n}
  &\geq
  \underbrace{\frac{(2b_n)^{2n}}{(2b_{n+1})^{2(n+1)}}}_{\displaystyle
    = \frac{(n+1)^{n+1}}{4\, n^{n}}}
  \;\cdot\;
  \underbrace{\frac{(n+1)!}{n!}}_{\displaystyle = n+1}
  \;\cdot\;
  (\epsilon/C)^{1/4}
  \;\cdot\;
  \frac{\sqrt{2}\,\ln(1+\sqrt{2})\,\|w^\mathrm{out}\|_1}{2}
  \notag\\[6pt]
  &\;=\;
  \frac{\sqrt{2}\,\ln(1+\sqrt{2})}{8}
  \;\cdot\;
  \|w^\mathrm{out}\|_1\,(n+1)^2
  \left(\frac{n+1}{n}\right)^{n}
  (\epsilon/C)^{1/4}.
  \label{eq:ratio_k1}
\end{align}

\begin{remark}[Growth threshold]
All factors in~\eqref{eq:ratio_k1} are positive, so $P_{n+1}/P_n > 0$ always. The ratio exceeds $1$ if and only if
\begin{equation}
  \|w^\mathrm{out}\|_1\,(n+1)^2 \left(\frac{n+1}{n}\right)^{\!n}
  \;>\;
  \frac{8C^{1/4}}{\sqrt{2}\,\ln(1+\sqrt{2})\;\epsilon^{1/4}}.
  \label{eq:threshold_iff}
\end{equation}
Since $((n+1)/n)^n$ is increasing in $n$ with $((n+1)/n)^n \geq 2$ for all $n \geq 1$, a sufficient condition is
\begin{equation}
  \|w^\mathrm{out}\|_1\,(n+1)^2 >
  \frac{8C^{1/4}}{\sqrt{2}\,\ln(1+\sqrt{2})\;\epsilon^{1/4}}.
  \label{eq:threshold_sufficient}
\end{equation}
\end{remark}

\begin{remark}[Lower bound interpretation]
The formula~\eqref{eq:ratio_k1} counts only global minima arising from single-neuron splitting (the $k=1$ duplication symmetry). The true $P_{n+1}$ also receives contributions from other global minima not of this form, so~\eqref{eq:ratio_k1} provides a lower bound on the true ratio.
\end{remark}

\section{Related Work}
\label{sec:related}
%%%%%%%%%%%%%%%%%%%%%%%%%%%%%%%%%%%%%%%%%%%%%%%%%%%%%%%%%%%%%%%%%%%%%%%%%%%%%%%

\textbf{Geometry of overparameterization.} \citet{sussmann1992uniqueness} showed early on that for minimal-width networks, weights are determined up to permutation and sign-flip symmetries. \citet{FUKUMIZU2000317} later demonstrated that as width increases, the hierarchical structure of MLPs allows local minima to transform into saddle points, smoothing the landscape. \citet{simsek2021geometry} rigorously extended this to the overparameterized setting, identifying the continuous symmetries (neuron splitting and merging) that arise when width exceeds the functional minimum; their work provides the foundational groupoid structure for our analysis. Recently, \citet{10.5555/3692070.3693489} leveraged these symmetries for parameter recovery, showing that overparameterized student networks naturally "cluster" their weights around the underlying minimal representation. We build upon these results by showing that these symmetries act as diagonal preconditioning on the Hessian and scale the probability mass of global minima.

% \citet{kunin2021neural} take a complementary view through conservation laws: each continuous symmetry of the loss induces a conserved quantity along gradient flow, analogous to Noether's theorem, which constrains training dynamics beyond the static geometry of minima. \citet{grigsby2023hidden} catalog additional ``hidden'' symmetries in ReLU networks that arise from the piecewise-linear activation structure, including symmetries that exist only at specific weight configurations (e.g., when a neuron's pre-activation is zero on all training data).

\textbf{Loss landscape structure.} \citet{cooper2021global} proves that for networks with $d$ parameters and $n$ data points ($d > n$), the global minima generically form a $(d - n)$-dimensional submanifold rather than isolated points. \citet{nguyen2019connected} shows that all sublevel sets of the loss are connected under mild overparameterization, ruling out ``bad local valleys.'' Empirically, \citet{garipov2018loss} and \citet{draxler2018no} independently found that minima from different training runs can be connected by paths of near-constant loss. \citet{entezari2022role} conjectured that after accounting for permutation symmetry, SGD solutions from different initializations lie in a single basin; \citet{ainsworth2023git} gave practical algorithms for aligning models modulo these symmetries. Our work adds to this picture by showing that overparameterization symmetries not only connect minima but reshape the curvature at those minima. \citet{safran2015quality} bounded the probability of initializing in a basin with monotonic descent to a global minimum, and \citet{soudry2017exponentially} proved that sub-optimal local minima vanish exponentially in volume as width grows. Our \cref{sec:volume} provides a specific symmetry-based mechanism for these basin-quality improvements.

\textbf{Convergence theory.} \citet{jacot2018ntk} introduced the neural tangent kernel (NTK), showing that in the infinite-width limit, training reduces to kernel regression with a fixed kernel. Building on this, several works proved that gradient descent on sufficiently wide networks converges to global minima at a linear rate: \citet{du2019gradient} for deep residual networks, \citet{allenzhu2019convergence} for general deep networks, and \citet{zou2019sgd} for SGD on deep ReLU networks. These results typically require the width to be polynomial in the number of samples and the depth. \citet{chizat2019lazy} showed that this favorable convergence arises from a ``lazy training'' regime where parameters stay near initialization and the model behaves as its linearization, which can underperform feature learning in practice. Our approach is different: rather than analyzing dynamics in a linearized regime, we characterize the geometric properties (conditioning, volume) of minima at finite width.

\textbf{Implicit acceleration and deep linear networks.} Deep linear networks, where $f(x) = W_L \cdots W_1 x$, are a standard testbed for optimization theory. \citet{saxe2014exact} derived exact learning dynamics, showing that depth creates saddle-to-saddle transitions that progressively learn singular value components. \citet{arora2018optimization} showed that the resulting dynamics acts as implicit acceleration: the factored parameterization preconditions the optimization of the end-to-end matrix, with the effect depending on the singular values of the current solution. \citet{ghosh2025learning} extend this analysis beyond the edge of stability, characterizing oscillatory dynamics at large learning rates. We extend the preconditioning perspective in a different direction, from depth to width and from linear to general activations, showing that overparameterization symmetries provide diagonal preconditioning on the Hessian.

\textbf{Conditioning and curvature.} \citet{sagun2017hessian} showed empirically that overparameterized networks have Hessian spectra with a large bulk of near-zero eigenvalues and a few data-dependent outliers; the near-zero eigenvalues correspond to flat directions from overparameterization, while the outliers capture the curvature relevant to learning. \citet{ghorbani2019investigation} gave a more detailed eigenvalue density analysis, connecting spectral structure to properties of the data and architecture. \citet{guilleescuret2023nowrongturns} found that optimization paths in neural networks have surprisingly simple geometry despite the non-convexity of the loss. \citet{yang2022tensor} showed through $\mu$P that wider networks are more robust to learning rate choices, suggesting that width improves conditioning in a way that reduces hyperparameter sensitivity. Classical results on diagonal preconditioning \citep{forsythe1955best, collobert2004large} show that coordinate rescaling can improve the condition number when eigenvectors are not uniformly distributed across coordinates; this underpins our analysis in \cref{sec:preconditioning}.

\textbf{Normalization and symmetry exploitation.} Several practical methods explicitly navigate weight-space symmetries to find better-conditioned parameterizations. Weight normalization \citep{salimans2016weight} decouples weight magnitude from direction, moving along the rescaling orbit. Path-SGD \citep{neyshabur2015path} uses a path-regularized objective invariant to rescaling, so the trajectory is not distorted by the parameterization choice within an equivalence class. \citet{mishkin2025levelset} introduced level-set teleportation, which solves for a better-conditioned point on the same level set by optimizing over the symmetry group. All of these can be viewed as searching within a symmetry orbit for favorable parameterizations.

\newpage
\end{document}